\title[]{Tracking Most Significant Arm Switches in Bandits}
\date{}
\DeclareMathOperator{\argmax}{argmax}
\newcommand{\Lsig}{{\normalfont \tilde{L}}}
\newcommand{\bs}{\backslash}
\newcommand{\meta}{{\small\textsf{\textup{META}}}\xspace}
\newcommand{\base}{{\small\textsf{\textup{Base-Alg}}}}
\DeclareMathOperator{\Amaster}{\mc{A}_{\text{master}}}
\DeclareMathOperator{\tstart}{\it{t}_{{\normalfont \text{start}}}}
\newcommand{\remove}[1]{}
\newtheorem{thm}{Theorem}
\newtheorem{prop}{Proposition}
\newtheorem{cor}{Corollary}
\newtheorem{defn}{Definition}
\newtheorem{rmk}{Remark}
\let\oldrmk\rmk
\renewcommand{\rmk}{\oldrmk\em}
\patchcmd{\algorithmic}{\addtolength{\ALC@tlm}{\leftmargin} }{\addtolength{\ALC@tlm}{\leftmargin}}{}{}
\newcommand{\nonl}{\renewcommand{\nl}{\let\nl}}
\newcommand\numberthis{\addtocounter{equation}{1}\tag{\theequation}}
\newlength{\strutheight}
\newcommand{\bld}{\textbf}
\newcommand{\ds}{\displaystyle}
\newcommand{\mc}{\mathcal}
\newcommand{\mb}{\mathbb}
\newcommand{\abs}[1]{\left| #1 \right |}
\begin{document}
\SetEndCharOfAlgoLine{}
\maketitle

\begin{abstract}
In \emph{bandit with distribution shifts}, one aims to automatically adapt to unknown changes in reward distribution, and \emph{restart} exploration when necessary. While this problem has been studied for many years, a recent breakthrough of \cite{auer2018,auer2019} provides the first adaptive procedure to guarantee an optimal (dynamic) regret $\sqrt{LT}$, for $T$ rounds, and an unknown number $L$ of changes. However, while this rate is tight in the worst case, it remained open whether faster rates are possible, without prior knowledge, if few changes in distribution are actually \emph{severe}. 

To resolve this question, we propose a new notion of \emph{significant shift}, which only counts very severe changes that clearly necessitate a restart: roughly, these are changes involving not only best arm switches, but also involving large aggregate differences in reward overtime. Thus, our resulting procedure adaptively achieves rates  always faster (sometimes significantly) than $O(\sqrt{ST})$, where $S\ll L$ only counts best arm switches, while at the same time, always faster than the optimal $O(V^{\frac{1}{3}}T^{\frac{2}{3}})$ when expressed in terms of \emph{total variation} $V$ (which aggregates differences overtime). Our results are expressed in enough generality to also capture non-stochastic adversarial settings.

\end{abstract}


\section{Introduction}



In Multi-armed bandit (MAB) an agent sequentially chooses an action, out of a finite set of $K$ actions (or \emph{arms}), based on partial and uncertain feedback in the form of rewards $Y_t(a)$ for past actions $a \in [K]$ (a \emph{pull} of arm $a$) \citep[see][for surveys]{bubeck2012a,slivkinsbook,lattimore}. The goal is to maximize the cumulative reward.

We consider the setting of {\em switching bandits}, where the distributions of arms' rewards change an unknown number of times, say $L$ times, till a time horizon $T$. Performance is then measured by a {\em dynamic regret} which compares rewards against those of the best arm at each round $t$ (i.e., the arm maximizing mean rewards $\mu_t(a) \doteq \mathbb{E}\ Y_t(a)$ over $a\in [K]$). \citet{garivier2011} showed that existing procedures \citep{auer2002nonstochastic,kocsis2006} could achieve a dynamic regret $\tilde O(\sqrt{LT})$, however, requiring knowledge of $L$. This requirement on knowledge of $L$, although impractical, remained standing till a recent breakthrough of \citet{auer2018,auer2019}, with important follow-ups in \citet{chen2019, wei2021} for contextual settings. Soon afterwards, other parameter-free approaches were studied in \citet{besson2019,mukherjee2019}. This work is concerned with the achievability of faster rates that better account for mild to severe changes.

In particular, the $\sqrt{LT}$ rate is understood to be tight only in the worst-case, since $L$ counts \emph{all} changes in distribution, even \emph{mild} ones. This is evidenced, e.g., by alternative bounds of the form $V^{\frac{1}{3}}T^{\frac{2}{3}}$ \citep[see e.g.][]{chen2019}, when the \emph{total variation} $V \doteq \sum_t \max_a\abs{\mu_{t+1}(a) - \mu_t(a)}$ (aggregating differences) is small, e.g., $V\lesssim 1/\sqrt{T}$, yielding regret $\sqrt{T}$, irrespective of the $L$ changes. However, the total variation perspective can also be pessimistic as $V$ could be large, up to $O(T)$ (yielding regret $O(T)$), even {when the best arm remains fixed across the $L$ changes}.

Outside of best arm switches, a dynamic regret of $\sqrt{T}$ remains possible \citep{allesiardo2017}, irrespective of $V$ and $L$. In light of this, achieving faster rates of $\sqrt{ST}$, for an unknown number $S\ll L$ of best arm switches, is understood as a key open problem \footnote{The open problem in \citep{foster2020}, of adapting to \emph{switching regret}, is even more general than that resolved here.} \citep{auer2019,foster2020}.

We in fact aim beyond solving this problem, as even the quantity $S$ overcounts the severity of changes, e.g., when $V$ remains small across best arm switches.
Instead we propose a new notion of \emph{significant shift} which aims to only count severe changes in mean reward that clearly necessitate a restart. Integral to our definition is the idea that a restart (in exploration) is only necessary when there are no \emph{safe} arms left to play. Here, an arm $a$ is considered safe if its total regret over any interval $[s_1, s_2]$ within a phase, namely $\sum_{t=s_1}^{s_2}  \max_{a'}\mu_t(a') - \mu_t(a)$, is $o(\sqrt{s_2 - s_1})$.


As such, a \emph{significant phase} ends only when no safe arm remains, which in particular does not count best arm switches that do not last long enough to hurt regret. For example, a change from arm $a$ to $a'$ as a new best arm, with a large gap $\mu_t(a') - \mu_t(a) = \delta$, constant over the next rounds, is ignored if another switch within the next $O(\delta^{-2})$ rounds, reverts back to $a$ as the best arm. Also, unlike with total variation, aggregate differences are ignored outside of best arm switches.

As a simple sanity check that other distributional changes beyond significant shifts are safely ignored, we show in {Proposition~\ref{prop:sanity}} that a simple oracle that effectively ignores all other changes, achieves regret $\sqrt{\abs{I}}$ over any significant phase $I$ of length $\abs{I}$. Furthermore, such a basic guarantee of $\sum_I \sqrt{\abs{I}}$ total regret over significant phases $I$, immediately implies (1) a rate of $\sqrt{\Lsig T}$ where $\Lsig \leq S\leq L$ only counts significant shifts, and (2) a rate of $V^{\frac{1}{3}}T^{\frac{2}{3}}$ in terms of total variation.

Finally we show that these guarantees are attained adaptively, i.e., with no prior knowledge of the environment {(Theorem~\ref{thm})}. Our key algorithmic innovation over previous works is that, rather than aiming to detect changes in the mean rewards $\mu_t(a)$ of arms $a$, we focus on detecting changes in the \emph{aggregate gaps in mean rewards} between arms, i.e., in $\sum_t \mu_t (a') - \mu_t (a)$, a stable quantity across milder shifts. While this quantity does not directly track the dynamic regret of arm $a$ (as the comparator $a'$ here is independent of $t$), it will turn out sufficient (see beginning of Section \ref{sec:alg}). Now, this quantity is estimable via a simple unbiased (importance-weighted) statistic, which concentrates via martingale inequalities and can thus be used for change-point detection; however, the estimate does require \emph{replays} of previously discarded arms $a'$. These replays are carefully scheduled so as to not overly affect regret, as inspired by previous work \citep[e.g.][]{auer2019, chen2019}. The resulting procedure \meta is described in Algorithm~\ref{meta-alg}.



Our results not only concern the stochastic switching bandits setting, but extend to the adversarial setting, since we allow for shifts at every round, though they may not trigger \emph{significant shifts}, and as well as allow for deterministic rewards (i.e., degenerate distributions). Formally, our setting admits a randomized, oblivious adversary which decides, a priori, an arbitrary sequence of distributions for the rewards \citep{slivkinsbook}. 

{Finally, we remark that a recent paper of \citet{abbasi-yadkori2022} independently and concurrently also resolves the problem of obtaining the $\sqrt{ST}$ rate adaptively.}

\subsection{Further Discussion on Related Work}\label{subsec:related}

The work of \citet{manegueu2021} is closest in spirit to this paper, also establishing dynamic regret bounds scaling with a generalized notion of shifts. However, their notion is considerably weaker, as it for instance counts changes resulting in large differences $|\mu_t(a)-\mu_{t-1}(a)|$ in mean rewards of a best arm $a$, even when the best arm does not change. More importantly, their procedures are non-adaptive, requiring knowledge of the number of such changes.

Many works have considered the problem of achieving regret depending only on the number of best arm switches $S$, which we discuss next.

First, in the setting of adversarial bandits with deterministic rewards, the EXP3.S procedure of \citep{auer2002nonstochastic,auer2002} can achieve a near-optimal\footnote{It remains unclear whether $\log$ terms are avoidable for adaptive procedures in this setting.} dynamic regret bound $\tilde{O}(\sqrt{ST})$, but requiring knowledge of $S$. However, for more practical situations where $S$ is unknown, the best regret guarantee to our knowledge is $O(\sqrt{ST}+T^{3/4})$, obtained by combining the Bandits-over-Bandits strategy of \citet{cheung2019learning} with EXP3.S \citep{foster2020,auer2019}. {It is also known that EXP3.S alone can obtain the suboptimal rate of $\tilde{O}(S\sqrt{T})$ for unknown $S$ by setting the parameters of the algorithm independently of $S$ \citep{auer2002}}.

The more general problem of obtaining adaptive {\em switching regret} $\sqrt{ST}$ for all $S$ remains open and is beyond the scope of this paper \citep{foster2020}. {Relatedly, recent work of \citet{zimmert21} showed the impossibility of obtaining such adaptive switching regret bounds against an adaptive adversary (where switch points may depend on the algorithm's actions).}

Second, the work of \citet{allesiardo2017} studies dynamic regret in the same randomized adversarial bandit setting as this paper (which as stated above also recovers the deterministic setting). They provide two algorithms that get regret $S\sqrt{T}$ and $T^{2/3}\sqrt{S}$, which do not match the optimal regret of $\sqrt{ST}$. Furthermore, their procedures either require knowledge of $S$ or lower-bounds on the magnitude of the changes to allow for easier detection. 

In contrast to the above works, we impose no requirements on detectability, nor knowledge of the environment, while achieving rates (at times considerably) faster than $\sqrt{ST}$.


A different body of literature considers more structured changes in reward distributions. In {\em rested rotting bandits}, the reward of an arm decreases depending on its amount of play \citep{seznec2020,levine2017,heidari2016,seznec2019}. \citet{slivkins2008} study a setting where the rewards follow a Brownian motion across time. Several works also studied a subcase of the above mentioned drifting environment: slowly varying bandits, parametrized by a local limit on the change of rewards between consecutive rounds \citep{wei-srivastava,krishnamurthy}. 
Generally, such stronger structural conditions can yield faster rates at times, often of the form $\phi\cdot \log(T)$, for some problem dependent quantity $\phi$.

Finally, various works combine adversarial and stochastic approaches to simultaneously address both settings \citep{bubeck2012b,seldin2014,auer2016}. However, these approaches measure regret to the best arm in hindsight, whereas our results hold for the stronger dynamic regret.

\section{Problem Setting}\label{sec:setup}

In our environment, an oblivious adversary decides a sequence of distributions on the rewards. This subsumes the stochastic switching bandit problem and is also a generalization of the adversarial bandit problem with deterministic rewards since our reward distributions can have zero variance.

We assume a decision space $[K]$ of $K$ arms with bounded reward distributions: arm $a$ at round $t$ has reward $Y_t(a) \in [0,1]$ with mean $\mu_t(a)$. A (possibly randomized) policy $\pi$ selects at each round $t$ some arm $\pi_t \in [K]$ and observes reward $Y_t(\pi_t)$. The goal is to minimize the {\em dynamic regret}, i.e., the expected regret to the best arm at each round. This is defined as:
\[
	R(\pi,T) \doteq \sum_{t=1}^T \max_{a\in [K]} \mu_t(a) - \mathbb{E}\left[ \sum_{t=1}^T \mu_t(\pi_t)\right].
\]

In this paper, we rely heavily on analyzing the gaps in mean rewards between arms. Therefore, for convenience, let $\delta_t(a',a) \doteq \mu_t(a') - \mu_t(a)$ denote the {\em relative gap} of arms $a$ to $a'$ at round $t$. Define the {\em worst gap} of arm $a$ as $\delta_t(a) \doteq \max_{a'\in[K]} \delta_t(a',a)$, corresponding to the instantaneous regret of playing $a$ at round $t$. Notice that the above regret is then given as $\sum_{t\in [T]} \mb{E}[\delta_t(\pi_t)]$.

\paragraph{Significant Shifts and Phases.}
First, we say arm $a$ incurs \bld{significant regret} on interval $[s_1,s_2]$ if:
\begin{equation}\label{eq:bad-arm}
	\sum_{t = s_1}^{s_2} \delta_{t}(a) \geq \sqrt{K\cdot (s_2-s_1)}\tag{$\star$}.
\end{equation}
Intuitively, such an arm is no longer safe to play. Now, if instead \eqref{eq:bad-arm} holds for no interval in a time period, the arm $a$ incurs little regret over that period.
We therefore propose to record a \emph{significant shift} only when there is no safe arm left to play. This idea leads to the following recursive definition.


\begin{defn}\label{defn:sig-shift}
	Let $\tau_0=1$. Then, {recursively for $i \geq 0$}, the $(i+1)$-th {\bf significant shift} {is recorded at time} $\tau_{i+1}$, {which denotes} the earliest time $\tau \in (\tau_i, T]$ such that \emph{for every arm} $a\in [K]$, there exists rounds $s_1<s_2, [s_1,s_2] \subseteq [\tau_i,\tau]$, such that {arm} $a$ has {significant} regret \eqref{eq:bad-arm} on $[s_1,s_2]$.

	{We will refer to intervals $[\tau_i, \tau_{i+1}), i\geq 0,$ as {\bf (significant) phases}. The unknown number of such phases (by time $T$) is denoted $\Lsig +1$, whereby $[\tau_\Lsig, \tau_{\Lsig +1})$, for $\tau_{\Lsig +1} \doteq T+1,$ denotes the last phase.}
\end{defn}




It should be clear from the above that not all shifts are counted, and in fact not all best arm switches are counted, since simply having $\delta_t(a)>0$, where $a$ was previously a best arm, does not trigger a significant shift. For further intuition that a procedure need only restart exploration upon a significant shift, notice that the last arm to trigger \eqref{eq:bad-arm} in a phase $I$, only incurs regret $O(\sqrt{\abs{I}})$ over the phase. As long as there exists a safe arm to play, a small regret may be achieved if all other arms are rejected in time as they become unsafe in a phase. This intuition is verified in the next proposition.


\begin{prop}[Sanity Check]\label{prop:sanity}
    For each round $t$ belonging to phase $[\tau_i,\tau_{i+1})$, define a good arm set $\mc{G}_t$ as the set of \bld{safe} arms, i.e., arms which do not yet satisfy \eqref{eq:bad-arm} on any subinterval of $[\tau_i,t]$. Then, consider the following \bld{oracle} procedure $\pi$: at each round $t$, $\pi$ plays a random arm $a\in \mc{G}_t$ with probability $1/|\mc{G}_t|$. We then have:
	\[
		R(\pi,T) \leq \log(K)\sum_{i=0}^{\tilde{L}} \sqrt{K(\tau_{i+1}-\tau_i)}.
	\]
\end{prop}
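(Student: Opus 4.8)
The plan is to bound the regret phase-by-phase, since $R(\pi,T) = \sum_i \sum_{t \in [\tau_i,\tau_{i+1})} \mathbb{E}[\delta_t(\pi_t)]$, and the phases partition $[T]$. So it suffices to show that on a single phase $I = [\tau_i,\tau_{i+1})$, the oracle incurs regret at most $\log(K)\sqrt{K|I|}$. Within the phase, the quantity that drives the analysis is $|\mc{G}_t|$: the good set starts at $\mc{G}_{\tau_i} = [K]$ (no arm can satisfy \eqref{eq:bad-arm} on a degenerate interval), is nonincreasing in $t$ (once an arm is unsafe it stays unsafe, since the witnessing subinterval of $[\tau_i,t]$ is still a subinterval of $[\tau_i,t']$ for $t'>t$), and — crucially — by the definition of $\tau_{i+1}$ as the \emph{earliest} time at which every arm has become unsafe within the current phase, we have $|\mc{G}_t| \geq 1$ for all $t < \tau_{i+1}$, i.e. throughout the phase.

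First I would handle the instantaneous regret of a single round. Since $\pi$ plays uniformly over $\mc{G}_t$, the expected instantaneous regret at round $t$ is $\frac{1}{|\mc{G}_t|}\sum_{a \in \mc{G}_t}\delta_t(a)$. I would next decompose the per-phase regret by tracking when arms leave $\mc{G}_t$: let the arms of $[K]$ be ejected at (ordered) times within the phase, so that there are "sub-blocks" of the phase on which $\mc{G}_t$ has a fixed cardinality $k$, for $k$ running from $K$ down to $1$. On the sub-block where $|\mc{G}_t| = k$, the regret contributed is $\frac{1}{k}\sum_{t}\sum_{a\in\mc{G}_t}\delta_t(a)$. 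The key observation is that for any arm $a$ that is \emph{still safe} at the end of such a sub-block — equivalently, safe throughout the block, since the block is an interval ending no later than $a$'s ejection — the sum $\sum_{t} \delta_t(a)$ over that block, which is a subinterval of $[\tau_i, t]$, is smaller than $\sqrt{K \cdot (\text{block length})}$, for \emph{every} such arm simultaneously. Hence the per-sub-block regret is at most $\frac{k}{k}\sqrt{K \cdot \ell_k} = \sqrt{K\ell_k}$ where $\ell_k$ is the length of the $k$-th sub-block — wait, more carefully, it is at most $\frac{1}{k}\cdot k \cdot \sqrt{K\ell_k} = \sqrt{K\ell_k}$ only if all $k$ safe arms' regret sums over the block are bounded by $\sqrt{K\ell_k}$; but safety is defined relative to subintervals of $[\tau_i, t]$, so I would instead carve the phase at ejection times and argue block-by-block that each arm present in the block has regret $< \sqrt{K\cdot(\text{block length})}$ on that block, since that block is a subinterval of $[\tau_i, t]$ for $t$ the block's right endpoint, and the arm hasn't yet triggered \eqref{eq:bad-arm} there.

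Then I would sum over sub-blocks: the phase regret is at most $\sum_{k=1}^{K} \sqrt{K \ell_k}$ where $\sum_k \ell_k = |I|$. To get the claimed bound I need $\sum_{k=1}^{K}\sqrt{K\ell_k} \le \log(K)\sqrt{K|I|}$, but by Cauchy–Schwarz $\sum_k \sqrt{K\ell_k} \le \sqrt{K}\sqrt{K}\sqrt{|I|} = K\sqrt{|I|}$, which is too lossy. This is the main obstacle: the naive block decomposition gives $\sqrt{K}$ too much. The fix — which is where the $\log K$ comes from — is to recognize that the per-round regret is $\frac{1}{|\mc{G}_t|}\sum_{a\in \mc{G}_t}\delta_t(a)$, and to \emph{not} discard the $1/|\mc{G}_t|$ factor. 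Instead I would identify, for each round $t$, the arm in $\mc{G}_t$ that will be ejected \emph{soonest} after $t$ (call it $a_t^{\min}$); its regret is what controls things, but more usefully, one should bound $\sum_{a \in \mc{G}_t}\delta_t(a)$ not against $|\mc{G}_t|\sqrt{\cdots}$ but realize that summing $\frac{1}{|\mc{G}_t|}$ over the block where a given arm is the one about to be ejected yields a harmonic-series factor. Concretely: attribute round $t$'s regret to the next arm to be ejected; the $j$-th arm to be ejected (from the end) "owns" a set of rounds over which $|\mc{G}_t| = j$, and on those rounds that arm's cumulative $\delta$ is $<\sqrt{K \cdot \ell_j}$, contributing $\frac{1}{j}\sqrt{K\ell_j}$ per... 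Then $\sum_j \frac{1}{j}\sqrt{K\ell_j} \le \sqrt{K}\sqrt{\sum_j \frac{1}{j}}\sqrt{\sum_j \frac{\ell_j}{j}}$ — still not quite $\log K$. Rather, I expect the intended argument bounds $\sum_{a\in\mc{G}_t}\delta_t(a) \le \sum_{j} \mathbb{1}[a \in \mc{G}_t]\delta_t(a)$ and uses that the arm ejected when $|\mc{G}|$ drops from $j$ to $j-1$ has total regret (over the maximal block on which it is present with $\geq j$ companions) at most $\sqrt{K|I|}$, so round $t$ with $|\mc{G}_t|=j$ contributes $\le \frac{1}{j}\cdot(\text{that arm's rate})$, and since each of the $K$ ejection-events is charged a $\frac{1}{j}$-weighted mass totalling $\sqrt{K|I|}$, summing $\sum_{j=1}^{K}\frac{1}{j} = O(\log K)$ gives the bound. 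I would need to set up the block/charging bookkeeping so that each arm's regret over its "ownership interval" is compared to $\sqrt{K \cdot |I|}$ (the whole phase length), not the block length, which is legitimate since the ownership interval is a subinterval of $[\tau_i, t]$ where the arm is still safe. Nailing down this charging scheme so the $\frac1j$ weights line up into a clean harmonic sum is the crux; the rest is Cauchy–Schwarz-free summation and the monotonicity/boundary facts about $\mc{G}_t$ noted above.
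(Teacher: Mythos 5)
Your final charging scheme is exactly the paper's argument, and it does work. In the paper (Lemma~\ref{lem:combine}), one orders arms by their ejection times $\tau_i^1\leq\cdots\leq\tau_i^K$ within the phase; then for arm $a$ (the $a$-th to leave), one uses that $|\mc{G}_t|\geq K+1-a$ on all rounds where $a$ is still present, and that $a$'s cumulative instantaneous regret over those rounds is $<\sqrt{K(\tau_i^a-\tau_i)}\leq\sqrt{K|I|}$ because $[\tau_i,\tau_i^a-1]$ is a subinterval on which $a$ is still safe. Summing $\sum_a \tfrac{1}{K+1-a}\sqrt{K|I|}$ then gives the harmonic factor $H_K\approx\log K$. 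This is precisely the route you hypothesize in your last paragraph; the detours through constant-$|\mc{G}_t|$ blocks and per-block lengths $\ell_k$ are unnecessary — the whole point, as you correctly identify, is to charge each arm against the \emph{phase} length $|I|$ (legitimate because its lifetime is a subinterval of the phase on which it is safe), not against the block length, and to keep the $1/|\mc{G}_t|$ factor so that the sum over arms becomes harmonic rather than multiplying by $K$.
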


\begin{proof}
    See Appendix~\ref{app:oracle}.
\end{proof}

It is not hard to show, essentially by Jensen's, that a rate of the form above is always faster than $O(\sqrt{\Lsig T} \land V^{\frac{1}{3}}T^{\frac{2}{3}})$. This is shown for instance in Corollaries \ref{cor:sig-shift-num} and \ref{cor:tv}.

It is therefore left to design a procedure which mimics the above oracle, i.e., that quickly detects and rejects unsafe arms, and restarts in time whenever no safe arm is left. Note that we may not be able to estimate $\sum_t \delta_t(a)$, but since there exists a relatively safe arm $a'$ in each phase, it'll be enough to ensure small regret w.r.t. such $a'$. Thus, we will need only track $\sum_t \delta_t(a', a)$.

\section{Results Overview}\label{subsec:result}

Our main result is a dynamic regret bound of similar order to Proposition~\ref{prop:sanity} {\em without knowledge of the environment, e.g., the significant shift times, or the number of significant phases}. It is stated for our algorithm \meta (Algorithm~\ref{meta-alg} of Section~\ref{sec:alg}), which, for simplicity, requires knowledge of the time horizon $T$. Knowledge of $T$ is easily removed using a {\em doubling trick} (see Proposition~\ref{prop:horizon-free}).


\begin{thm}\label{thm}
    Let $\pi$ denote the \meta procedure. Let $\{\tau_i\}_{i =0}^{\tilde L+1}$ denote the unknown significant shift times of Definition~\ref{defn:sig-shift}. We then have for some $C>0$:
    \[
	    R(\pi,T) \leq C \log^{3}(T) \sum_{i=0}^{\Lsig} \sqrt{K\cdot (\tau_{i+1} - \tau_i)}.
    \]
\end{thm}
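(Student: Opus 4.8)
The plan is to adapt the replay-based change-detection paradigm of \citet{auer2019,chen2019} to the \emph{aggregate gap} quantities $\sum_t \delta_t(a',a)$, and to show that \meta (Algorithm~\ref{meta-alg}) mimics the oracle of Proposition~\ref{prop:sanity} up to polylogarithmic factors. Recall that \meta runs, over a geometric grid of $O(\log T)$ time scales $m$, instances of a \base{} routine that each carry a shrinking candidate set of arms; a \base{} instance at scale $m$ is launched either at the start of an episode or at a randomly scheduled \emph{replay} of length $\asymp 2^m$, while a corralling \masteralg over the currently active instances picks the arm actually pulled. Inside a \base{} instance, an arm $a$ is evicted from the candidate set as soon as an importance-weighted estimate of $\sum_{t=s_1}^{s_2}\delta_t(a',a)$ on some sub-interval exceeds a threshold of order $\sqrt{K(s_2-s_1)}$ against a still-surviving arm $a'$; a global restart (a new episode) is triggered when the longest-running instance runs out of candidates. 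The first step is to fix the clean event $\mc{E}$ on which every such estimate lies within its confidence width of its conditional mean: each estimate is a sum of bounded, reweighted martingale differences, so a Freedman/Bernstein inequality gives uniform deviation $\lesssim \sqrt{K(s_2-s_1)}\cdot\mathrm{polylog}(T)$ after a union bound over the $\mathrm{poly}(K,T)$ relevant (arm, comparator, scale, replay-start, sub-interval) tuples. Hence $\Pr[\mc{E}^c]\le 1/T$, so $\mc{E}^c$ contributes only $O(1)$ to the regret, and everything that follows is conditioned on $\mc{E}$.

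Next I would fix an episode $[s,e)$ of \meta and bound its regret \emph{assuming no significant shift relative to $s$ occurs inside it}. On $\mc{E}$ there are no false evictions --- every arm evicted during $[s,e)$ genuinely incurred $\Omega(\sqrt{K\cdot(\mathrm{length})})$ regret on the relevant sub-interval --- and, dually, any arm that is safe on $[s,e)$ in the sense of \eqref{eq:bad-arm} is never evicted, so the \base{} instance at the matching scale keeps it throughout. The regret over $[s,e)$ then decomposes into: (i) the regret of that retained safe arm, $o(\sqrt{|[s,e)|})$; (ii) the regret of \masteralg relative to the \base{} instance holding it, of order $\sqrt{K|[s,e)|}\,\mathrm{polylog}(T)$ from corralling $O(\log T)$ instances; (iii) the aggregate regret of all eviction events across all scales, which telescopes over dyadic sub-intervals exactly as in the proof of Proposition~\ref{prop:sanity} and is again $\sqrt{K|[s,e)|}\,\mathrm{polylog}(T)$; and (iv) the cost of rounds spent executing replays not selected by \masteralg, controlled by the sparsity of the replay schedule to $\sqrt{K|[s,e)|}\,\mathrm{polylog}(T)$. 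Collecting the polylog factors --- from the $O(\log T)$ scales, the corralling master, the $\sqrt{\log T}$ confidence widths, and (below) the $O(\log T)$ episodes a phase may be split into --- yields a per-episode bound of order $\mathrm{polylog}(T)\,\sqrt{K|[s,e)|}$ on $\mc{E}$.

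It remains to line up \meta's episodes with the significant phases $[\tau_i,\tau_{i+1})$ of Definition~\ref{defn:sig-shift}. No premature restart: if an episode starts at some $s\ge\tau_i$ within a phase, then a safe arm on $[\tau_i,\cdot]$ still exists, so by the no-false-eviction property the longest-running \base{} instance cannot empty out before $\tau_{i+1}$; consequently each phase overlaps at most $O(\log T)$ episodes, since a cascade of restarts shortly after $\tau_i$ (from stale instances launched before $\tau_i$ whose safe arm was already evicted) can only continue for $O(\log T)$ steps before a fresh instance that retains a safe arm is installed and survives to $\tau_{i+1}$. Timely restart: once $t$ passes $\tau_{i+1}$, every arm has satisfied \eqref{eq:bad-arm} on some sub-interval of $[\tau_i,\cdot]$ --- this is exactly the ``no safe arm left'' content of Definition~\ref{defn:sig-shift} --- and the replay schedule guarantees that with high probability a replay of each relevant scale runs long enough inside the phase to detect and evict that arm within $O(\sqrt{K|I|})$ further rounds, so a restart is forced by time $\tau_{i+1}+O(\sqrt{K|I|})$. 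Grouping \meta's episodes by the phase containing them, summing the per-episode bound of the previous paragraph, and using $\sqrt{x}+\sqrt{y}\le\sqrt{2(x+y)}$ to recombine the $O(\log T)$ pieces of each phase into a single $\sqrt{K(\tau_{i+1}-\tau_i)}$ term yields $R(\pi,T)\le C\log^3(T)\sum_{i=0}^{\Lsig}\sqrt{K(\tau_{i+1}-\tau_i)}$.

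The hard part is the interplay, in this last step, between the replay probabilities, the detection deadlines, and the replay cost: the schedule must be tuned so that within \emph{every} significant phase a replay of \emph{every} relevant scale is both launched and survives long enough to trigger the gap-based detection (making restarts timely), while the expected time in unselected replays plus the regret of \masteralg to the correct \base{} instance stays $\mathrm{polylog}(T)\cdot\sqrt{K|I|}$. A second subtlety, absent from earlier mean-change analyses, is that the comparator $a'$ in $\sum_t\delta_t(a',a)$ is fixed while the optimal arm drifts within a phase; as in the discussion after Proposition~\ref{prop:sanity}, this is handled by noting that the last arm to become unsafe is itself $O(\sqrt{K|I|})$-safe, so certifying small regret to whichever arm a \base{} instance currently retains suffices to certify small dynamic regret.
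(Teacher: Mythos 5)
Your high-level plan — a clean martingale event, per-episode regret bounds, and aligning episodes with significant phases — is the right flavor, but several of the specific mechanisms you invoke do not exist in the algorithm or differ from what the paper actually proves, and the hardest part of the argument is left as a gesture.

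First, you describe \meta as maintaining, over a geometric grid of $O(\log T)$ scales, concurrent \base{} instances that a ``corralling \masteralg'' selects among. This is not how \meta works. At any round a \emph{single} \base{} instance is active (the most recently launched one, either at the episode start or by a replay), and it simply plays an arm uniformly at random from its current candidate set $\mc{A}_t$. There is no corralling layer, and $\Amaster$ is not a selector: it is a shared bookkeeping set that accumulates every eviction from every \base{} instance and triggers a global restart precisely when it becomes empty. Consequently your item (ii), ``regret of \masteralg relative to the \base{} instance holding the safe arm from corralling $O(\log T)$ instances,'' has no counterpart in the algorithm, and the telescoping in your item (iii) is not the paper's mechanism either.

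Second, your episode-to-phase accounting is both weaker and argued wrongly. You claim $O(\log T)$ episodes per phase via a ``cascade of restarts from stale instances.'' But on the clean event the restart condition is that $\Amaster = \emptyset$, which by Lemma~\ref{lem:counting-eps} can happen only if \emph{every} arm incurred significant regret \eqref{eq:bad-arm} on a subinterval of the current episode — i.e., only if a significant shift occurred inside the episode. Hence each phase straddles at most \emph{two} episodes, not $O(\log T)$, and there is no cascade: once an episode starts inside a phase, $\Amaster$ cannot empty again before $\tau_{i+1}$. Your ``longest-running instance cannot empty out'' framing conflates a \base{} instance's $\mc{A}_t$ (which does get reset to $[K]$ by replays) with $\Amaster$ (which does not).

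Third, the ``timely restart is forced by time $\tau_{i+1}+O(\sqrt{K|I|})$'' claim is not established — the paper never proves a deterministic (or even high-probability) deadline on the restart time, and your proof gives no argument for it. The paper instead decomposes each episode's relative regret $\sum_t \delta_t(a_t^\sharp,\pi_t)$ into (a) regret of $\pi_t$ to the last master arm $a_\ell$ and (b) regret of $a_\ell$ to the safe arm $a_t^\sharp$, and then bounds (b) directly by partitioning each phase into \emph{bad segments} (Definition~\ref{defn:bad-segment}) and showing, via a Chernoff bound over the replay schedule, that a \emph{perfect replay} (Definition~\ref{defn:perfect-replay}) is scheduled before too many bad segments elapse (Proposition~\ref{cor:behavior}, Corollary~\ref{cor:perfect}). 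This machinery — defining bad segments w.r.t.\ arbitrary candidate arms $a$ so that one can later plug in the undetermined $a_\ell$, conditioning carefully on $t_\ell$ alone to decouple $t_{\ell+1}$, $\pi_t$, $a_\ell$, and $B_{s,m}$, and controlling the replay cost via the ``proper replay'' bookkeeping — is where most of the technical content lives, and your sketch compresses it to ``the replay schedule guarantees that with high probability a replay of each relevant scale runs long enough.'' That sentence is the claim, not the proof.

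Finally, your item (i) — the regret of the retained safe arm — corresponds to the paper's first term $\sum_t \delta_t(a_t^\sharp)$, but in the paper it is handled directly via Lemma~\ref{lem:combine} (the oracle bound), not by arguing that some \base{} instance retains it: $a_t^\sharp$ is a deterministic comparator, and its regret is bounded phase-by-phase regardless of the algorithm's behavior. You should separate this deterministic term from the algorithm-dependent terms before introducing any probabilistic machinery.
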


\begin{rmk}
	The $\log$ dependence in the rate of Theorem~\ref{thm} can be improved to
	\[
		\sum_{i=0}^{\Lsig} \log(K)\log^{3/2}(KT)\sqrt{K\cdot (\tau_{i+1}-\tau_i)} + \log(K)\log^2(KT)\cdot K,
	\]
	by modifying the threshold in \eqref{eq:elim} to $\sqrt{K\log(T)\cdot(s_2-s_1)\vee K^2\log^2(T)}$ and adjusting Proposition~\ref{prop:concentration} in the same manner. This is avoided in the analysis to simplify presentation.
\end{rmk}

Note that, while the rates above are similar to the \emph{oracle} rates on Proposition~\ref{prop:sanity}, \meta may not actually achieve regret $\sqrt{\abs{I}}$ over \emph{each} phase $I$, but only guarantees $\sum_I\sqrt{\abs{I}}$ on aggregate.

The following corollary is immediate from Jensen's inequality.

\begin{cor}[Bounding by Number of Significant Shifts]\label{cor:sig-shift-num}
    Using the same notation of Theorem~\ref{thm}:
    \[
	    R(\pi,T) \leq C\log^{3}(T)\cdot \sqrt{(\Lsig + 1) K T}.
    \]
\end{cor}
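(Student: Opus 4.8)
The plan is to derive the corollary directly from the bound of Theorem~\ref{thm} by an application of the Cauchy--Schwarz inequality (equivalently, concavity of $\sqrt{\cdot}$ together with Jensen's inequality applied to the uniform distribution over the $\Lsig+1$ phases). Starting from
\[
R(\pi,T) \leq C\log^3(T)\sum_{i=0}^{\Lsig}\sqrt{K\cdot(\tau_{i+1}-\tau_i)} = C\log^3(T)\sqrt{K}\sum_{i=0}^{\Lsig}\sqrt{\tau_{i+1}-\tau_i},
\]
I would bound $\sum_{i=0}^{\Lsig}\sqrt{\tau_{i+1}-\tau_i}$ by viewing it as an inner product of the all-ones vector of length $\Lsig+1$ with the vector $(\sqrt{\tau_{i+1}-\tau_i})_{i=0}^{\Lsig}$, so that Cauchy--Schwarz yields $\sum_{i=0}^{\Lsig}\sqrt{\tau_{i+1}-\tau_i}\leq \sqrt{\Lsig+1}\cdot\bigl(\sum_{i=0}^{\Lsig}(\tau_{i+1}-\tau_i)\bigr)^{1/2}$.

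The only point requiring a moment's care is evaluating the telescoping sum of phase lengths: by Definition~\ref{defn:sig-shift} we have $\tau_0=1$ and $\tau_{\Lsig+1}\doteq T+1$, hence $\sum_{i=0}^{\Lsig}(\tau_{i+1}-\tau_i)=\tau_{\Lsig+1}-\tau_0 = T$. Substituting this back gives $\sum_{i=0}^{\Lsig}\sqrt{\tau_{i+1}-\tau_i}\leq\sqrt{(\Lsig+1)T}$, and therefore
\[
R(\pi,T)\leq C\log^3(T)\sqrt{K}\cdot\sqrt{(\Lsig+1)T} = C\log^3(T)\sqrt{(\Lsig+1)KT},
\]
which is the claimed bound (with the same constant $C$).

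There is no real obstacle here; the argument is a one-line consequence of Theorem~\ref{thm}, and the proof is essentially complete once the telescoping identity $\sum_i(\tau_{i+1}-\tau_i)=T$ is recorded. If one wanted to avoid Cauchy--Schwarz explicitly, the identical conclusion follows by applying Jensen's inequality to the concave function $x\mapsto\sqrt{x}$ with weights $1/(\Lsig+1)$ on the $\Lsig+1$ values $\tau_{i+1}-\tau_i$; I would just pick whichever phrasing is most consistent with the statement ``immediate from Jensen's inequality'' already used in the surrounding text.
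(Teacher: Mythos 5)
Your proof is correct and matches the paper's approach: the paper also derives Corollary~\ref{cor:sig-shift-num} directly from Theorem~\ref{thm} by Jensen's inequality (equivalently Cauchy--Schwarz, as you note) together with the telescoping identity $\sum_{i=0}^{\Lsig}(\tau_{i+1}-\tau_i)=T$.
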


Since $\Lsig \leq S$, we have by Corollary~\ref{cor:sig-shift-num} that \meta recovers the $\tilde{O}(\sqrt{SKT})$ rate of EXP3.S, however without knowledge of $\Lsig$ or $S$. The rate can in fact be much faster when $\Lsig \ll S$.



The $\tilde{O}(\sqrt{\Lsig K T})$ rate is optimal up to $\log$ terms in the worst-case since any algorithm which has to solve $\Lsig$ independent adversarial bandit problems of length $T/\Lsig$ {can be forced to suffer} regret $\Omega(\sqrt{KT/\Lsig})$ on each phase, following similar arguments as in \citet{auer2002,auer2019}. 


The next corollary asserts that Theorem~\ref{thm} also recovers the optimal rate for the {\em drifting environment} setting, i.e., in terms of total-variation $V$. The proof (Appendix~\ref{app:tv-proof}) follows from the definition of significant shift (Definition~\ref{defn:sig-shift}) in that the total-variation over every phase $[\tau_i,\tau_{i+1})$ can be bounded below by roughly $1/\sqrt{\tau_{i+1} - \tau_i}$.

\begin{cor}[Bounding by Total Variation]\label{cor:tv}
    Let $V \doteq \sum_{t=2}^T \max_{a\in [K]} |\mu_t(a) - \mu_{t-1}(a)|$ be the unknown total variation of change in the rewards. Using the same notation of Theorem~\ref{thm}:
    \[
	    R(\pi,T) \leq C\log^{3}(T) \left(\sqrt{KT} + (KV)^{1/3} T^{2/3}\right).
    \]
\end{cor}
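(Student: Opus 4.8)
The plan is to relate each significant phase length $\tau_{i+1}-\tau_i$ to the total variation $V_i \doteq \sum_{t=\tau_i+1}^{\tau_{i+1}-1}\max_a|\mu_t(a)-\mu_{t-1}(a)|$ accumulated within that phase, and then apply Theorem~\ref{thm} together with a Hölder/Jensen argument. The key structural fact, which I would extract from Definition~\ref{defn:sig-shift}, is that a significant shift at $\tau_{i+1}$ requires \emph{every} arm to have incurred significant regret \eqref{eq:bad-arm} on some subinterval of $[\tau_i,\tau_{i+1})$. In particular this holds for any arm that is a best arm at some round of the phase; call it $a^\circ$, best at round $t^\circ \in [\tau_i,\tau_{i+1})$. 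So there exist $s_1 < s_2$ in $[\tau_i,\tau_{i+1})$ with $\sum_{t=s_1}^{s_2}\delta_t(a^\circ) \geq \sqrt{K(s_2-s_1)}$. Since $\delta_{t^\circ}(a^\circ)=0$, and $\delta_t(a^\circ) = \max_{a'}\mu_t(a')-\mu_t(a^\circ)$ can change by at most $2\max_a|\mu_t(a)-\mu_{t-1}(a)|$ between consecutive rounds, each term $\delta_t(a^\circ)$ for $t\in[s_1,s_2]$ is at most $2\sum_{r}\max_a|\mu_r(a)-\mu_{r-1}(a)|$ where $r$ ranges over rounds between $t$ and $t^\circ$ — hence at most $2V_i$. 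Therefore $\sqrt{K(s_2-s_1)} \leq \sum_{t=s_1}^{s_2}\delta_t(a^\circ) \leq (s_2-s_1+1)\cdot 2V_i$, which after bounding $s_2 - s_1 + 1 \le 2(s_2-s_1)$ (or treating the degenerate case $s_2 = s_1$, which forces $V_i \gtrsim \sqrt{K}$ separately) yields $\sqrt{K} \lesssim \sqrt{s_2-s_1}\,V_i \leq \sqrt{\tau_{i+1}-\tau_i}\,V_i$, i.e.
\[
	\tau_{i+1}-\tau_i \;\gtrsim\; \frac{K}{V_i^2}, \qquad\text{equivalently}\qquad \sqrt{\tau_{i+1}-\tau_i}\;\gtrsim\;\frac{\sqrt{K}}{V_i}.
\]
This should be done carefully for the last phase too, where $\tau_{\Lsig+1}=T+1$ and no further significant shift is triggered; there the bound $\sqrt{K(\tau_{\Lsig+1}-\tau_{\Lsig})} \leq C\log^3(T)\sqrt{KT}$ is absorbed into the additive $\sqrt{KT}$ term of the corollary, so the argument below only needs the first $\Lsig$ phases.

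Next I would combine this with Theorem~\ref{thm}. Split the sum: $\sum_{i=0}^{\Lsig}\sqrt{K(\tau_{i+1}-\tau_i)} \leq \sqrt{KT} + \sum_{i=0}^{\Lsig-1}\sqrt{K(\tau_{i+1}-\tau_i)}$, the first term coming from singling out the largest phase (or just the last). For the remaining sum, I want to bound $\sqrt{K(\tau_{i+1}-\tau_i)}$ using both the trivial bound and the refined relation. Write $\ell_i \doteq \tau_{i+1}-\tau_i$. We have $\sum_i \ell_i \leq T$ and, from the step above, $\ell_i \geq K/V_i^2$, i.e. $V_i \geq \sqrt{K/\ell_i}$, so $V_i\sqrt{\ell_i} \geq \sqrt{K}$, hence $\sqrt{K\ell_i} \leq V_i \ell_i$. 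But $\ell_i \le T$, so $\sqrt{K\ell_i} \leq V_i T$; that alone is too lossy. The right move is to balance: $\sqrt{K\ell_i} = (\sqrt{K\ell_i})^{1/3}(\sqrt{K\ell_i})^{2/3} \leq (V_i\ell_i)^{1/3}(K\ell_i)^{1/3} = (K V_i)^{1/3}\ell_i^{2/3}$. Now sum over $i$ and apply Hölder with exponents $(3,3/2)$: $\sum_i (KV_i)^{1/3}\ell_i^{2/3} \leq (K\sum_i V_i)^{1/3}(\sum_i \ell_i)^{2/3} \leq (KV)^{1/3}T^{2/3}$, using $\sum_i V_i \leq V$ and $\sum_i \ell_i \leq T$. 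Multiplying by the $C\log^3(T)$ factor from Theorem~\ref{thm} and adding back the $\sqrt{KT}$ term gives exactly the claimed bound.

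The main obstacle I anticipate is handling the degenerate and boundary cases cleanly rather than the core inequality: specifically (a) the case $s_2 = s_1$ where \eqref{eq:bad-arm} reads $\delta_{s_1}(a^\circ) \ge 0$, which is vacuous and must be excluded — but Definition~\ref{defn:sig-shift} is stated with $s_1 < s_2$, so one must double-check the phase-length lower bound still goes through with the correct constant when $s_2-s_1 = 1$; (b) making sure the chain "$\delta_t(a^\circ)$ changes by at most $2\max_a|\mu_t(a)-\mu_{t-1}(a)|$ per round" is justified — this is because $\max_a\mu_t(a)$ is $1$-Lipschitz in the $\ell_\infty$-variation of the $\mu_t$ vector; and (c) correctly attributing variation to phases without double-counting across phase boundaries, which is why $V_i$ is defined with the interior range $[\tau_i+1,\tau_{i+1}-1]$ and why $\sum_i V_i \le V$ holds with room to spare. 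None of these is deep, but they are where constants and a careful statement matter, and they are presumably what the appendix proof spells out.
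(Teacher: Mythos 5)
Your proposal is correct and follows essentially the same strategy as the paper's proof: lower-bound the per-phase variation $V_i$ by roughly $\sqrt{K/(\tau_{i+1}-\tau_i)}$, then combine with Theorem~\ref{thm} via H\"older with exponents $(3,3/2)$. The only tactical difference is in how the per-phase lower bound is obtained — you track the gap of an arm $a^\circ$ that is best at some in-phase round $t^\circ$ (so $\delta_{t^\circ}(a^\circ)=0$) and use that $\delta_t(a^\circ)$ is $2$-Lipschitz in the $\ell_\infty$-variation, while the paper takes $a=\argmax_a\mu_{\tau_{i+1}}(a)$, finds a single round $t$ with $\delta_t(a)\ge\sqrt{K/(\tau_{i+1}-\tau_i)}$ by averaging, and telescopes the variation from $t$ to $\tau_{i+1}$ — but both yield the same inequality and the remainder of the argument coincides.
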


Finally, as stated above, a simple doubling trick removes the need to know $T$.

\begin{prop}\label{prop:horizon-free}
	The doubling-horizon version of \meta has dynamic regret
	\[
		\tilde{O}\!\left(\sqrt{(\Lsig+1) K T} \land \left(\sqrt{KT} + (KV)^{1/3} T^{2/3})\right)\right).
	\]
\end{prop}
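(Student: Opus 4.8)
The plan is to use the standard doubling trick and reduce to Theorem~\ref{thm} and Corollary~\ref{cor:tv} applied on sub-intervals. Partition the rounds into consecutive epochs $J_1,J_2,\ldots$ of geometrically growing length $|J_m| = 2^{m-1}$, and on each epoch run a fresh, independent copy of \meta with input horizon $|J_m|$. Covering $[T]$ requires only $M = O(\log T)$ epochs, and $\sum_m |J_m| \le 2T$. Since the copies are independent and none uses knowledge of $T$, this is a legitimate horizon-free procedure, and its total dynamic regret is $\sum_{m=1}^M R(\pi,J_m)$, the sum of the regrets incurred within the epochs.

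The key structural fact I would establish is a \emph{coarsening} property of significant shifts. Let $\tilde{L}_m$ be the number of significant shifts of Definition~\ref{defn:sig-shift} for the environment $\{\mu_t\}_{t\in J_m}$ viewed as a standalone problem over horizon $|J_m|$ (with the recursion restarted at the left endpoint of $J_m$). Then $\sum_{m=1}^M \tilde{L}_m \le \Lsig$. To prove this, fix $J_m = [a,b]$ and let $[\tau_i,\tau_{i+1})$ be the global significant phase containing $a$. Since $a \ge \tau_i$, every subinterval of $[a,\sigma]$ is also a subinterval of $[\tau_i,\sigma]$; hence if \emph{every} arm incurs significant regret \eqref{eq:bad-arm} on some subinterval of $[a,\sigma]$, the same holds over $[\tau_i,\sigma]$, which by Definition~\ref{defn:sig-shift} forces $\tau_{i+1}\le\sigma$. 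Thus the first restricted significant shift inside $J_m$ occurs no earlier than $\tau_{i+1}$, and iterating shows the $k$-th restricted shift is $\ge\tau_{i+k}$; consequently $J_m$ contains at least $\tilde{L}_m$ of the global shift times $\tau_1,\ldots,\tau_{\Lsig}$. As the epochs are disjoint, summing yields $\sum_m \tilde{L}_m \le \Lsig$.

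With this in hand the two bounds follow by routine aggregation. On epoch $J_m$, the restricted significant phases partition $J_m$ into $\tilde{L}_m+1$ pieces of total length $|J_m|$, so Theorem~\ref{thm} and Cauchy--Schwarz give $R(\pi,J_m) \le C\log^3(T)\sqrt{(\tilde{L}_m+1)K|J_m|}$; summing over $m$, applying Cauchy--Schwarz once more with $\sum_m(\tilde{L}_m+1) \le \Lsig + M$ and $\sum_m |J_m| \le 2T$, and absorbing the lower-order $\log$ factors coming from $M = O(\log T)$, yields $\sum_m R(\pi,J_m) = \tilde{O}\big(\sqrt{(\Lsig+1)KT}\big)$. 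For the total-variation bound, apply Corollary~\ref{cor:tv} within each epoch to get $R(\pi,J_m) \le C\log^3(T)\big(\sqrt{K|J_m|} + (KV_m)^{1/3}|J_m|^{2/3}\big)$, where $V_m$ is the total variation of $\{\mu_t\}$ inside $J_m$ not counting the jump across the left boundary of $J_m$, so that $\sum_m V_m \le V$. The geometric sum gives $\sum_m \sqrt{|J_m|} = O(\sqrt{T})$, while H\"older's inequality gives $\sum_m (V_m)^{1/3}|J_m|^{2/3} \le \big(\sum_m V_m\big)^{1/3}\big(\sum_m |J_m|\big)^{2/3} \le V^{1/3}(2T)^{2/3}$, so $\sum_m R(\pi,J_m) = \tilde{O}\big(\sqrt{KT} + (KV)^{1/3}T^{2/3}\big)$. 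Both bounds hold term-by-term for the same run of the procedure, so the total regret is at most their minimum, which is exactly the claim.

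I expect the coarsening property to be the only real obstacle, specifically the bookkeeping around epoch boundaries that are \emph{not} significant-shift boundaries: it is this mismatch that makes the clean inequality $\sum_m \tilde{L}_m \le \Lsig$ (rather than a weaker version with additive slack per epoch) require the monotonicity argument above. Everything else reduces to a geometric-series estimate together with Cauchy--Schwarz and H\"older.
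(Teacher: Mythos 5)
The paper does not spell out a proof of Proposition~\ref{prop:horizon-free} (it is stated right after ``a simple doubling trick removes the need to know $T$,'' with no appendix pointer), so there is no official proof to compare against. Reviewing your argument on its own terms: it is correct, and the one step that genuinely required thought is the coarsening monotonicity $\sum_m \tilde{L}_m \le \Lsig$. Your proof of it is right: since each restricted recursion starts at a point $a$ at or after the current global phase start $\tau_i$, any witness interval $[s_1,s_2]\subseteq[a,\sigma]$ is also a witness over $[\tau_i,\sigma]$, so the restricted shift cannot fire before the global one; the induction then shows the $k$-th restricted shift in $J_m$ is $\ge \tau_{i+k}$, hence $J_m\cap\{\tau_1,\dots,\tau_{\Lsig}\}$ has size at least $\tilde{L}_m$, and disjointness of epochs closes it. The remaining aggregation via Corollary~\ref{cor:sig-shift-num}, Cauchy--Schwarz with $\sum_m(\tilde{L}_m+1)\le \Lsig+O(\log T)$, and Hölder for the total-variation bound (taking care not to count the jumps across epoch boundaries so that $\sum_m V_m\le V$) is routine and correct, and the two bounds indeed hold simultaneously so their minimum applies. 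This is almost certainly the argument the authors intended.
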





\section{Algorithm}\label{sec:alg}

\begin{algorithm2e}[h] \small 
\caption{{\bld{M}eta-\bld{E}limination while \bld{T}racking \bld{A}rms (\meta)}}
\label{meta-alg}
	{\nonl \bld{Input:} horizon $T$.}\\
	  \bld{Initialize:} round count $t \leftarrow 1$.\\
	  \textbf{Episode Initialization (setting global variables {\normalfont $t_{\ell},\Amaster,B_{s,m}$})}:\\
	  \Indp $t_{\ell} \leftarrow t$. \label{line:ep-start}  \tcp*{$t_{\ell}$ indicates start of $\ell$-th episode.}
	  $\Amaster \leftarrow [K]$ \label{line:define-end} \tcp*{Master candidate arm set.}
    For each $m=2,4,\ldots,2^{\lceil\log(T)\rceil}$ and $s=t_{\ell}+1,\ldots,T$:\\
        \Indp Sample and store $B_{s,m} \sim \text{Bernoulli}\left(\frac{1}{\sqrt{m\cdot (s-t_{\ell})}}\right)$. \label{line:add-replay} \tcp*{Set replay schedule.}
        \Indm
	\Indm
	 \vspace{0.2cm}
	 Run $\base(t_{\ell},T + 1 - t_{\ell})$. \label{line:ongoing-base} \\
  \lIf{$t < T$}{restart from Line 2 (i.e. start a new episode).
  \label{line:restart}}
\end{algorithm2e}


 \begin{algorithm2e}[h]\small
 	\caption{{\base$(\tstart,m_0)$: elimination with randomized arm-pulls}}
 \label{base-alg}
 {\nonl \textbf{Input}: starting round $\tstart$, scheduled duration $m_0$.}\\
 \textbf{Initialize}: $t \leftarrow \tstart$, $\mc{A}_t \leftarrow [K]$. \tcp*{$t$ and $\mc{A}_t$ are global variables.}
 	  \While{$t \leq T$}{
 		  Play a random arm $a\in \mc{A}_t$ selected with probability $1/|\mc{A}_t|$. \label{line:play}\\
		  Let $\mc{A}_{\text{current}} \leftarrow \mc{A}_{t}$ \label{line:current-arm-set} \tcp*{Save current candidate arm set.}
		  Increment $t \leftarrow t+1$.\\
 		  \uIf{$\exists m\text{{\normalfont\,such that }} B_{t,m}>0$}{
                 Let $m \doteq \max\{m \in \{2,4,\ldots,2^{\lceil \log(T)\rceil}\}:B_{t,m}>0\}$. \tcp*{Set maximum replay length.}
                 Run $\base(t,m)$.\label{line:replay} \tcp*{Replay interrupted by child replay.}
 		   }
		   \lIf{$t > \tstart + m_0$}{
		   RETURN.
			}
 		 \bld{Evict bad arms:}\\
		 \Indp $\Amaster \leftarrow \Amaster \bs \{a\in[K]:\text{$\exists $ rounds $[s_1,s_2]\subseteq [t_{\ell},t)$ s.t. \eqref{eq:elim} holds}\}$. \label{line:evict-master}\\
		 $\mc{A}_{t} \leftarrow \mc{A}_{\text{current}} \bs \{a\in [K]:\text{$\exists $ rounds $[s_1,s_2]\subseteq [\tstart,t)$ s.t. \eqref{eq:elim} holds}\}$. \label{line:evict-At}\\
 		\Indm
		\bld{Restart criterion:} \lIf{$\normalfont\Amaster=\emptyset$}{RETURN.}
 	}
\end{algorithm2e}

As explained earlier in the introduction, a main departure from past work is that, in order to detect a (significant) shift, we aim to detect changes in aggregate gaps $\sum_{t=s_1}^{s_2} \delta_t(a', a)$ between pairs of arms, over intervals $[s_1, s_2]$, rather than, e.g., changes in mean rewards $\mu_t(a)$. In particular this allows us to avoid triggering a restart upon benign changes in distribution.

However, as previously discussed, $\sum_{t=s_1}^{s_2} \delta_t(a', a)$ does not directly track the dynamic regret $\sum_{t=s_1}^{s_2} \max_{a'} \delta_t(a', a)$. Fortunately, for one, it is a lower bound on the dynamic regret; thus if it is large, so is the dynamic regret. On the other hand, if $\max_{a'} \sum_{t=s_1}^{s_2} \delta_t(a', a)$ remains small for some $a$ over any $[s_1, s_2]$, we can rely on the very definition of \emph{significant shift} to infer that $a$ remains relatively safe to play: let $a'$ be the last arm to incur \emph{significant regret} \eqref{eq:bad-arm} in a phase $I$, $a$ and $a'$ must then have similar regret $\sqrt{\abs{I}}$ over $I$.

Now, as we eliminate arms overtime, estimating $\sum_{t=s_1}^{s_2} \delta_t(a', a)$ involves \emph{replaying} previously eliminated arms. This has to be done on a careful schedule so as not to overly affect regret when there is no change; here we simply borrow from similar schedules as in previous work \citep{auer2019,chen2019,wei2021}.

We next establish some useful terminology for discussing our approach in more detail.

\paragraph{Terminology and Overview.} Our main procedure \meta operates in {\em episodes}, starting each episode by playing a \emph{base algorithm} for a possible duration equal to the rounds left till $T$. Now, a base algorithm occasionally {\em activates} its own base algorithms of varying durations (Line~\ref{line:replay} of Algorithm~\ref{base-alg}), called {\em replays}, aimed at detecting changes according to the aforementioned schedule (stored in the variable $\{B_{s,m}\}$). We refer to the base algorithm playing at round $t$ as the {\em active base algorithm}. This results in recursive calls, from \emph{parent} to \emph{child} instances of \base, as depicted in Figure~\ref{fig:replays}.

\begin{figure}

\centering
\includegraphics[scale=0.5]{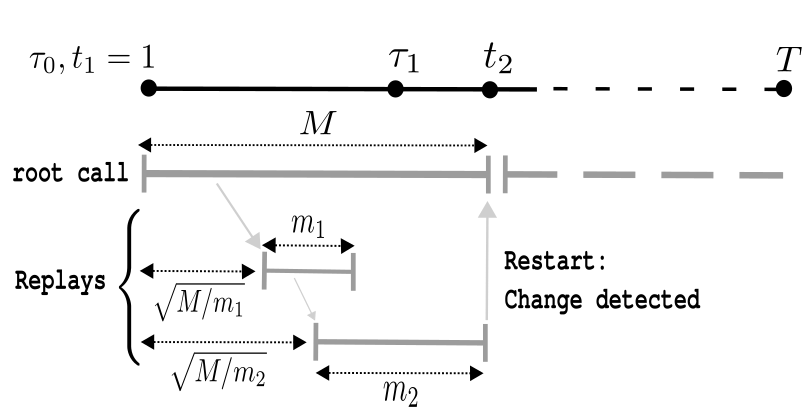}
\caption{\noindent \small
Shown are two replay durations $m= m_1\text{ or } m_2$, and corresponding replays are shown (gray segments) at intervals of roughly ${\sqrt{M/m}}$ rounds, where $M$ is the eventual length of an episode. Each replay (gray segments) aims to detect a $1/\sqrt{m}$ magnitude \emph{change}, i.e., an average gap $\frac{1}{m}\sum_{t=1}^m \delta_t(a)$ of order $1/\sqrt{m}$. As a recursive procedure, the replays of \base\, form a \emph{parent-child} relationship as depicted.
}
\label{fig:replays}
\end{figure}

Each instance of \base\ maintains at each round $t$ a candidate arm set $\mc{A}_t$, initialized to $[K]$ at $t=\tstart$ and further refined as it eliminates suboptimal arms.
Instances of \base\ and \meta share information, in the form of
{\em global variables} as listed below:
\begin{itemize}[leftmargin=*]
\item All variables defined in \meta, namely  $t_{\ell},t,\mc{A}_{\text{master}},\{B_{s,m}\}$ (see Lines~\ref{line:ep-start}--\ref{line:add-replay} of Algorithm~\ref{meta-alg}).
\item All arms played at any round $t$, along with observed rewards $Y_t(a)$, and the candidate arm set $\mc{A}_t$. As such, the size of $\mc{A}_t$ can fluctuate throughout an episode as it is reset to $[K]$ by new replays, i.e., as a base algorithm is activated by a parent, and again reset as the parent resumes (as maintained via $\mc{A}_{\text{current}}$).
\end{itemize}

          By sharing these global variables, any replay can trigger a new episode: every time an arm is evicted by a replay, it is also evicted from $\Amaster$, essentially the candidate arm set for the current episode. A new episode is triggered when $\Amaster$ is empty, i.e., there is no \emph{safe} arm left to play.



\paragraph{Eviction Criteria and Critical Estimates.} A running intuition so far is that  \base\ ejects any arm $a$ from $\mc{A}_t$ when it determines that $\sum_{t=s_1}^{s_2}\delta_t(a', a)$ is large. Note that any rejected arm is also immediately rejected by the parent when a child terminates.


Now, the quantity $\sum_{t=s_1}^{s_2} \delta_t(a', a)$ is estimated as
$\sum_{t=s_1}^{s_2} \hat \delta_t(a', a)$, whereby the relative gap $\delta_t(a',a)$ is estimated by importance weighting as:
\begin{align}
	\hat{\delta}_t(a',a) \doteq |\mc{A}_{t}|\cdot \left( Y_t(a')\cdot \pmb{1}\{\pi_{t}=a'\} - Y_t(a)\cdot \pmb{1}\{\pi_{t}=a\}\right).
	\label{eq:delta-t-estimate}
\end{align}

Note that the above is an unbiased estimate of $\delta_t(a', a)$ whenever $a'$ and $a$ are both in $\mc{A}_t$ at time $t$. It then follows that the difference $\sum_{t=s_1}^{s_2} \left (\hat \delta_t (a', a) - \delta_t(a', a)\right)$ is a martingale that concentrates at a rate roughly $\sqrt{K(s_2 -s_1)}$ (see Proposition \ref{prop:concentration}).


An arm $a$ is then evicted at round $t$ if, for some fixed $C_0 > 0$ \footnote{$C_0 > 0$ needs to be sufficiently large, but does not depend on the horizon $T$ or any distributional parameters. An appropriate value can be derived from the regret analysis.}, $\exists$ rounds $s_1 < s_2\leq t$ such that:
\begin{equation}\label{eq:elim}
	\max_{a'\in [K]}	\sum_{t=s_1}^{s_2} \hat{\delta}_s(a',a) > \log(T)\sqrt{C_0\cdot \left( K (s_2-s_1) \vee K^2\right)}.
\end{equation}

\section{Regret Analysis}\label{sec:overview}

Continuing from the discussion at the beginning of Section~\ref{subsec:result}, for any arm $a$ being played by \meta, ideally, the regret to the last arm $a'$ to incur significant regret in any given phase remains small, lest $a$ would be evicted before such regret $\sum_{t=s_1}^{s_2} \delta_t(a',a)$ is too large over a given interval $[s_1,s_2]$. However, we can only reliably estimate such regret via $\sum_{t=s_1}^{s_2} \hat{\delta}_t(a',a) $ when both $a, a'$ are being played over $[s_1,s_2]$ (in fact, this estimator is only unbiased when this is the case).

To this end, we will decompose each phase into \emph{bad segments} where $\sum_{t=s_1}^{s_2} \delta_t(a',a)$ is large (see Definition~\ref{defn:bad-segment}); we then argue that, while it is safe to miss a few such bad segments, a timely replay of $\base$ is likely to occur over some such segment, ensuring both $a, a'$ are being played, and thus leading to the detection that $a$ is no longer safe.
Note that it is easy to argue that such a perfect replay will not evict arm $a'$ since otherwise $\max_{a''} \sum_{t=s_1}^{s_2} \delta_t(a'',a') \leq \sum_{t=s_1}^{s_2} \delta_t(a')$ must be large, i.e., $a'$ must have significant regret on some interval before the end of the phase. For simplicity, we'll refer to such well-timed replays as {\em perfect replays}. This is discussed in more detail in Section \ref{subsec:episode-regret}.

\paragraph{Preliminaries.} Throughout the proof $c_1,c_2,\ldots$ will denote positive constants not depending on $T$ or any distributional parameters. Recall from Line~\ref{line:ep-start} of Algorithm~\ref{meta-alg} that $t_{\ell}$ is the first round of the $\ell$-th episode. WLOG, there are $T$ total episodes and, by convention, we let $t_{\ell} \doteq T+1$ if only $\ell-1$ episodes occurred by round $T$. 

We first handle the simple case of $T<K$.
In this case, the bound of Theorem~\ref{thm} vacuously holds:
\[
	\sum_{i=0}^{\tilde{L}} \sqrt{K\cdot (\tau_{i+1} - \tau_i)} \geq \sqrt{K\cdot (\tau_{\Lsig+1} - \tau_0)} = \sqrt{K\cdot (T+1-1)} > T.
\]
Thus, it remains to show Theorem~\ref{thm} for $T \geq K$.

\subsection{Decomposing the Total Regret}

Continuing from the earlier discussion, we first decompose the total regret into (i) the regret of the last arm to incur significant regret in a phase and (ii) the relative regret of \meta to this {\em last safe arm}. We start by defining notation, to be used throughout, for this last safe arm,

\begin{defn}[Last Safe Arm in a Phase]
	As described in Section~\ref{sec:setup}, there is a \bld{safe} arm to play in each phase $i$, namely the last arm to incur significant regret \eqref{eq:bad-arm} in phase $i$, which we denote $a_i^\sharp$. We then let $a_t^\sharp$, $t \in [0, T]$, track $a_i^\sharp$ across phases, that is $a_t^\sharp \doteq a_i^\sharp$ for $t\in [\tau_i,\tau_{i+1})$.
\end{defn}

Then, the total regret decomposes as:
\[
	\mb{E}\left[ \sum_{t=1}^T \delta_t(\pi_t) \right] = \sum_{t=1}^T \delta_t(a_t^\sharp) + \mb{E}\left[ \sum_{t=1}^T \delta_t(a_t^\sharp,\pi_t) \right].
\]
Note that there is no randomness in the first sum of the R.H.S. above since the safe arm $a_t^\sharp$ is non-random for any fixed round $t$. This first sum is handled similarly to the guarantee for the oracle procedure of Proposition~\ref{prop:sanity}, as each arm $a_i^\sharp$ is safe to play in phase $i$. In particular, showing this sum is of the desired order follows from taking $\mc{S}_t=\{a_t^\sharp\}$ in Lemma~\ref{lem:combine} of Appendix~\ref{app:oracle}.


So, it remains to bound the relative regret of $\pi_t$ to the safe arm $a_t^\sharp$. To this end, we will first show that the estimation error of the aggregate relative gap over an interval is suitably small.

\subsection{Estimating the Aggregate Gap over an Interval}

We first recall a Bernstein-type martingale tail bound, namely, Freedman's inequality, which yields as tight a concentration rate as necessary for our purpose (see Proposition \ref{prop:concentration}):

\begin{lemma}[Theorem 1 of \citet{beygelzimer2011}]\label{lem:martingale-concentration}
	Let $X_1,\ldots,X_n\in\mb{R}$ be a martingale difference sequence with respect to some filtration $\{\mc{F}_0,\mc{F}_1,\ldots\}$. Assume for all $t$ that $X_t\leq R$ a.s. and that $\sum_{i=1}^n \mb{E}[X_i^2|\mc{F}_{i-1}] \leq V_n$ a.s. for some constant  $V_n$ only depending on $n$.
	Then for any $\delta\in (0,1)$ and $\lambda\in [0,1/R]$, with probability at least $1-\delta$, we have:
	\[
		\sum_{i=1}^n X_i \leq (e-2) \lambda V_n + \frac{\log(1/\delta)}{\lambda}.
	\]
\end{lemma}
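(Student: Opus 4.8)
The plan is to prove this Freedman-type bound by the standard exponential-supermartingale (Chernoff) method. First I would record the elementary inequality
\[
	e^x \leq 1 + x + (e-2)x^2 \qquad \text{for all } x \leq 1,
\]
which is the source of the constant $e-2$. This can be verified by a short calculus argument: the function $g(x) = 1 + x + (e-2)x^2 - e^x$ satisfies $g(0) = g(1) = 0$; since $g''(x) = 2(e-2) - e^x$, $g$ is convex on $(-\infty, \log(2e-4)]$ and, together with $g'(0)=0$, is therefore nonnegative there (which covers all $x \leq 0$ and $x$ small positive); on $[\log(2e-4), 1]$ the function $g$ is concave and nonnegative at both endpoints, hence nonnegative throughout. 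Note this inequality is tight at $x=1$, which matches the constraint $\lambda \in [0, 1/R]$ in the statement, since then $\lambda X_t \leq \lambda R \leq 1$.

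Next, fix $\lambda \in [0, 1/R]$ and write $S_k = \sum_{i=1}^k X_i$ and $W_k = \sum_{i=1}^k \mathbb{E}[X_i^2 \mid \mathcal{F}_{i-1}]$. Applying the elementary inequality to $x = \lambda X_i \leq 1$ and using $\mathbb{E}[X_i \mid \mathcal{F}_{i-1}] = 0$ gives, a.s.,
\[
	\mathbb{E}\!\left[ e^{\lambda X_i} \mid \mathcal{F}_{i-1} \right] \leq 1 + (e-2)\lambda^2\, \mathbb{E}[X_i^2 \mid \mathcal{F}_{i-1}] \leq \exp\!\big( (e-2)\lambda^2\, \mathbb{E}[X_i^2 \mid \mathcal{F}_{i-1}] \big).
\]
Consequently $Z_k \doteq \exp\!\big( \lambda S_k - (e-2)\lambda^2 W_k \big)$ is a nonnegative supermartingale with $Z_0 = 1$, so $\mathbb{E}[Z_n] \leq 1$. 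By Markov's inequality, $\mathbb{P}(Z_n \geq 1/\delta) \leq \delta$. On the complementary event we have $\lambda S_n - (e-2)\lambda^2 W_n < \log(1/\delta)$; using the a.s. bound $W_n \leq V_n$ and $\lambda \geq 0$, and dividing by $\lambda$ (the case $\lambda = 0$ being trivial), we obtain $S_n < (e-2)\lambda V_n + \log(1/\delta)/\lambda$, which is the claim.

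The only mildly delicate step is the verification of the elementary inequality over the full range $x \leq 1$, since it is tight at the endpoint and one cannot afford slack there; the rest is routine supermartingale bookkeeping. The hypothesis that $V_n$ is deterministic is used only in the final substitution $W_n \leq V_n$; the supermartingale property itself needs nothing beyond the a.s. conditional bounds already built into the process $W_k$.
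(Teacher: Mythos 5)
Your proof is correct. Note that the paper itself offers no proof of this lemma: it is imported verbatim as Theorem 1 of \citet{beygelzimer2011}. Your argument is the standard Chernoff/supermartingale derivation of Freedman's inequality (which is also how it is established in the cited reference): the elementary bound $e^x \leq 1 + x + (e-2)x^2$ for $x \leq 1$ is verified carefully, the process $Z_k = \exp(\lambda S_k - (e-2)\lambda^2 W_k)$ is shown to be a nonnegative supermartingale using that $W_k$ is $\mathcal F_{k-1}$-measurable and that the $X_i$ are centered, and Markov's inequality together with the a.s.\ bound $W_n \leq V_n$ (where the determinism of $V_n$ is used, as you note, only in this last substitution) yields the claim after dividing by $\lambda$. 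No gaps.
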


Letting $\lambda = \frac{1}{R} \land \sqrt{\frac{\log(1/\delta)}{V_n}}$ in Lemma~\ref{lem:martingale-concentration} implies that if $V_n \geq R^2\log(1/\delta)$, then the above L.H.S. is upper-bounded by ${(e-1)}\sqrt{V_n \log(1/\delta)}$. On the other hand, if $V_n < R^2\log(1/\delta)$, then the bound becomes $(e-1) R \log(1/\delta)$. Thus, in any case, Lemma~\ref{lem:martingale-concentration} gives us that with probability at least $1-\delta$:
\begin{equation}\label{eq:concentration-optimized}
	\sum_{i=1}^n X_i \leq (e-1)\left( \sqrt{V_n\log(1/\delta)} + R\log(1/\delta)\right).
\end{equation}

We next apply Lemma~\ref{lem:martingale-concentration} for the result below, where the estimate $\hat{\delta}_t(a',a)$ is given in  \eqref{eq:delta-t-estimate}.

\begin{prop}\label{prop:concentration}
    Let $\mc{E}_1$ be the event that for all rounds $s_1<s_2$ and all arms $a,a'\in [K]$:
	\begin{equation}\label{eq:error-bound}
		\left| \sum_{t=s_1}^{s_2} \hat{\delta}_t(a',a) - \sum_{t=s_1}^{s_2} \mb{E}\left[\hat{\delta}_t(a',a)\mid \mc{F}_{t-1}\right] \right| \leq c_1\log(T) \left( \sqrt{K(s_2-s_1)} + K\right),
	\end{equation}
    for an appropriately large constant $c_1$, and where $\mc{F} \doteq \{\mc{F}_t\}_{t=1}^T$ is the filtration with $\mc{F}_t$ generated by $\{\pi_s,Y_s(\pi_s)\}_{s=1}^t$. Then, $\mc{E}_1$ occurs with probability at least $1-1/T^2$. 
\end{prop}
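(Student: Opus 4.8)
The plan is to apply Freedman's inequality (Lemma~\ref{lem:martingale-concentration}), in the optimized form \eqref{eq:concentration-optimized}, to the martingale difference sequence $X_t \doteq \hat\delta_t(a',a) - \mb{E}[\hat\delta_t(a',a)\mid\mc{F}_{t-1}]$ for a \emph{fixed} pair $a,a'$ and a \emph{fixed} pair of endpoints $s_1<s_2$, and then take a union bound over all such choices. First I would record the two inputs Freedman's inequality needs: an almost-sure bound $X_t \leq R$ and an almost-sure bound on the conditional variance $\sum_{t=s_1}^{s_2}\mb{E}[X_t^2\mid\mc{F}_{t-1}] \leq V_n$. For the range bound: from \eqref{eq:delta-t-estimate}, $|\hat\delta_t(a',a)| \leq |\mc{A}_t|\cdot 1 \leq K$ since rewards lie in $[0,1]$ and at most one indicator fires, so $|X_t|\leq 2K$; hence we may take $R = 2K$. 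For the conditional variance: conditionally on $\mc{F}_{t-1}$, the active set $\mc{A}_t$ is determined, and $\pi_t$ is uniform on $\mc{A}_t$, so $\mb{E}[\hat\delta_t(a',a)^2\mid\mc{F}_{t-1}] = |\mc{A}_t|^2\cdot\big(\mb{E}[Y_t(a')^2]\,\tfrac{\pmb 1\{a'\in\mc{A}_t\}}{|\mc{A}_t|} + \mb{E}[Y_t(a)^2]\,\tfrac{\pmb 1\{a\in\mc{A}_t\}}{|\mc{A}_t|}\big) \leq |\mc{A}_t|\cdot(\mb{E}[Y_t(a')^2]+\mb{E}[Y_t(a)^2]) \leq 2|\mc{A}_t| \leq 2K$ (using boundedness and that the cross term vanishes since the two indicators are disjoint). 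Therefore $\mb{E}[X_t^2\mid\mc{F}_{t-1}] \leq \mb{E}[\hat\delta_t(a',a)^2\mid\mc{F}_{t-1}] \leq 2K$, and summing over the $s_2-s_1$ rounds gives $V_n \leq 2K(s_2-s_1)$ (note $n = s_2-s_1$ here).

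Next I would plug $R = 2K$ and $V_n = 2K(s_2-s_1)$ into \eqref{eq:concentration-optimized} with failure probability $\delta = T^{-2}/(2K^2T^2) $ chosen small enough to absorb the union bound below; this gives, with probability at least $1-\delta$,
\[
	\left|\sum_{t=s_1}^{s_2} X_t\right| \leq (e-1)\Big(\sqrt{2K(s_2-s_1)\log(1/\delta)} + 2K\log(1/\delta)\Big),
\]
after also applying the same bound to $-X_t$ (which has the same range and variance bounds up to the constant, or one simply notes $-R \le X_t$ is not needed — instead apply Freedman to both $\{X_t\}$ and $\{-X_t\}$ and union bound, costing a factor $2$). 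Since $\log(1/\delta) = O(\log(KT)) = O(\log T)$ for $T\geq K$ (the regime we are in, per the preliminaries), this is at most $c_1\log(T)(\sqrt{K(s_2-s_1)} + K)$ for a suitable absolute constant $c_1$, which is exactly \eqref{eq:error-bound} for the fixed triple $(a,a',s_1,s_2)$.

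Finally I would union bound over all $a,a'\in[K]$ (at most $K^2$ choices) and all $s_1<s_2$ in $[1,T]$ (at most $T^2$ choices); the total failure probability is at most $K^2 T^2\cdot\delta \leq 1/T^2$ by the choice of $\delta$. The only genuine subtlety — and the step I would be most careful about — is that $|\mc{A}_t|$ is itself a random, time-varying quantity (it is reset to $[K]$ by replays and shrinks as arms are evicted), so one must check that it is $\mc{F}_{t-1}$-measurable, i.e., that the algorithm's decision of which arms to evict and which replays to launch before playing at round $t$ depends only on rewards observed through round $t-1$; this is immediate from inspecting Algorithm~\ref{base-alg} (eviction via \eqref{eq:elim} and the replay schedule $B_{t,m}$ use only past data), so the conditional-variance computation above is legitimate. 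Everything else is a routine instantiation of Freedman's inequality plus a union bound.
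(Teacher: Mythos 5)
Your proposal is correct and takes essentially the same approach as the paper: apply Freedman's inequality in the optimized form \eqref{eq:concentration-optimized} with the range bound $R = 2K$ and the conditional variance bound $V_n \leq 2K(s_2-s_1)$ (the paper bounds the variance slightly differently, via $\mb{E}[\pmb{1}\{\pi_t = a\text{ or }a'\}\mid\mc{F}_{t-1}] \leq 2/|\mc{A}_t|$ and $Y_t \in [0,1]$, but arrives at the same $2|\mc{A}_t| \leq 2K$ per-round bound), then union bound over the $O(K^2T^2)$ choices of $(a,a',s_1,s_2)$. Your added care about the two-sided version (applying Freedman to both $\{X_t\}$ and $\{-X_t\}$) and the observation that $\mc{A}_t$ is $\mc{F}_{t-1}$-measurable are both correct and fill in details the paper leaves implicit.
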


\begin{proof}
	The martingale difference $\hat{\delta}_t(a',a) - \mb{E}[ \hat{\delta}_t(a',a) | \mc{F}_{t-1}]$ is clearly bounded above by $2K$ for all rounds $s$ and all arms $a,a'$. We also have a cumulative variance bound:
	\begin{align*}
		\sum_{t=s_1}^{s_2}	\mb{E}\ \left [\hat{\delta}_t^2(a',a) \mid \mc{F}_{t-1}\right] 
												  &\leq \sum_{t=s_1}^{s_2} |\mc{A}_t|^2 \cdot \mb{E}\ \left [\pmb{1}\{ \pi_t=a\text{ or }a'\}\mid \mc{F}_{t-1}\right ]\\
												  &\leq \sum_{t=s_1}^{s_2} |\mc{A}_t|^2 \cdot \frac{2}{|\mc{A}_t|}=  \sum_{t=s_1}^{s_2} 2|\mc{A}_t|
												  \leq 2K\cdot (s_2-s_1).
	\end{align*}
	Then, the result follows from \eqref{eq:concentration-optimized}, and taking union bounds over arms $a,a'$ and $s_1,s_2$.
\end{proof}


\subsection{Relating Episodes to Significant Phases}

We next show that w.h.p. a restart occurs (i.e., a new episode begins) only if a significant shift has occurred sometime within the episode. {As an important consequence, each significant phase straddles at most two episodes; this fact comes in handy in summing the regret over episodes (Section~\ref{subsec:sum-regret}).}

Recall from Definition~\ref{defn:sig-shift} that $\tau_1,\tau_2,\ldots,\tau_{\Lsig}$ are the times of the significant shifts and that $t_1,\ldots,t_{T}$ are the episode start times. 

\begin{lemma}[{\bf Restart Implies Significant Shift}]\label{lem:counting-eps}
	On event $\mc{E}_1$, for each episode $[t_{\ell},t_{\ell+1})$ with $t_{\ell+1}\leq T$ (i.e., an episode which concludes with a restart), there exists a significant shift $\tau_i\in [t_{\ell},t_{\ell+1})$.
\end{lemma}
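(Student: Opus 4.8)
The plan is to argue the contrapositive: if no significant shift occurred during episode $[t_\ell, t_{\ell+1})$, then $\Amaster$ never becomes empty and no replay's eviction rule forces a return to \meta's Line~2 — so the episode cannot end with a restart before $T$. Concretely, a restart at $t_{\ell+1} \le T$ happens only when the ``Restart criterion'' in \base\ fires, i.e.\ when $\Amaster = \emptyset$. So it suffices to show that, on $\mc{E}_1$ and under the no-significant-shift assumption, at least one arm always survives in $\Amaster$ throughout $[t_\ell, t_{\ell+1})$.

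First I would identify a candidate survivor. Since no significant shift is recorded strictly inside $[t_\ell, t_{\ell+1})$, by Definition~\ref{defn:sig-shift} there must be some arm $a^\star \in [K]$ that does \emph{not} incur significant regret \eqref{eq:bad-arm} on \emph{any} subinterval $[s_1,s_2] \subseteq [t_\ell, t_{\ell+1})$ — otherwise every arm would have satisfied \eqref{eq:bad-arm} on some subinterval contained in $[t_\ell, \tau]$ for some $\tau < t_{\ell+1}$, forcing a recorded shift. (A minor subtlety: the significant-shift recursion is anchored at the $\tau_i$'s, not at $t_\ell$; I would note that if $t_\ell$ lies inside a phase $[\tau_i,\tau_{i+1})$ with $\tau_{i+1} > t_{\ell+1}$, then no arm has significant regret on any subinterval of $[\tau_i, t_{\ell+1}) \supseteq [t_\ell, t_{\ell+1})$, and in particular such an $a^\star$ exists; the boundary case where a phase ends exactly at or just after $t_\ell$ is handled the same way by working within the later phase.) Thus $\sum_{t=s_1}^{s_2}\delta_t(a^\star) < \sqrt{K(s_2-s_1)}$ for all $[s_1,s_2]\subseteq[t_\ell,t_{\ell+1})$, hence in particular $\max_{a'}\sum_{t=s_1}^{s_2}\delta_t(a',a^\star) < \sqrt{K(s_2-s_1)}$.

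Next I would show $a^\star$ is never evicted from $\Amaster$. Eviction from $\Amaster$ (Line~\ref{line:evict-master} of \base) happens only if \eqref{eq:elim} holds for $a^\star$ on some $[s_1,s_2] \subseteq [t_\ell, t)$ with some comparator $a'$. On $\mc{E}_1$, Proposition~\ref{prop:concentration} gives $\sum_{t=s_1}^{s_2}\hat\delta_t(a',a^\star) \le \sum_{t=s_1}^{s_2}\mb{E}[\hat\delta_t(a',a^\star)\mid\mc{F}_{t-1}] + c_1\log(T)(\sqrt{K(s_2-s_1)}+K)$. Here I would invoke the key fact that $\mb{E}[\hat\delta_t(a',a^\star)\mid\mc{F}_{t-1}] \le \delta_t(a',a^\star)$ whenever $a^\star \in \mc{A}_t$ (it equals $\delta_t(a',a^\star)$ if also $a'\in\mc{A}_t$, and is $\le \delta_t(a',a^\star)$ — indeed $\le \mu_t(a') \le$ the gap plus a nonnegative term, or one simply drops the $a'$-indicator term which is nonnegative — if $a'$ has already been evicted; I'd spell out the correct inequality carefully). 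Summing, $\sum_{t=s_1}^{s_2}\mb{E}[\hat\delta_t(a',a^\star)\mid\mc{F}_{t-1}] \le \sum \delta_t(a',a^\star) < \sqrt{K(s_2-s_1)}$ by the previous paragraph. Combining with the concentration bound, $\sum_{t=s_1}^{s_2}\hat\delta_t(a',a^\star) < (1+c_1\log T)(\sqrt{K(s_2-s_1)}+K) \le \log(T)\sqrt{C_0(K(s_2-s_1)\vee K^2)}$ for $C_0$ chosen large enough (this uses $(\sqrt{x}+K) \lesssim \sqrt{x\vee K^2}$). Hence \eqref{eq:elim} never triggers for $a^\star$, so $a^\star$ stays in $\Amaster$, contradicting $\Amaster=\emptyset$ at $t_{\ell+1}$.

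The main obstacle I anticipate is the bookkeeping around \emph{which} instance of \base\ can evict $a^\star$ and over which intervals: evictions from $\Amaster$ can be triggered by any (possibly deeply nested child) replay, always referencing intervals $[s_1,s_2] \subseteq [t_\ell, t)$, and one must be sure the concentration event $\mc{E}_1$ (which is stated uniformly over \emph{all} $s_1<s_2$ and all arm pairs) covers every such check simultaneously — it does, since $\mc{E}_1$ is a single high-probability event quantified over all rounds and arms. A secondary care point is the conditional-expectation inequality $\mb{E}[\hat\delta_t(a',a^\star)\mid\mc{F}_{t-1}] \le \delta_t(a',a^\star)$ when $a'\notin\mc{A}_t$: I would make sure the importance-weighting definition \eqref{eq:delta-t-estimate} indeed yields an \emph{upper} bound (the $-Y_t(a)\pmb1\{\pi_t=a\}$ term contributes $-\mu_t(a^\star)$ in expectation when $a^\star\in\mc A_t$, and the $Y_t(a')\pmb1\{\pi_t=a'\}$ term contributes $0$ when $a'\notin\mc A_t$, giving expectation $-\mu_t(a^\star) \le \mu_t(a')-\mu_t(a^\star) = \delta_t(a',a^\star)$), and also handle the symmetric possibility that $a^\star \notin \mc{A}_t$ inside some replay — but that cannot happen while $a^\star \in \Amaster$, since an arm is evicted from a local $\mc A_t$ only via \eqref{eq:elim} which, as just shown, also would have evicted it from $\Amaster$.
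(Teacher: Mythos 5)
Your proposal is correct and uses essentially the same mechanism as the paper's proof, only run in the contrapositive direction: the paper argues forward (every arm evicted from $\Amaster$ must have incurred significant regret on the triggering interval, hence all arms do, hence a shift occurred), while you argue backward (if no shift occurred, some arm $a^\star$ never incurs significant regret, so on $\mc{E}_1$ it can never satisfy \eqref{eq:elim} and hence survives in $\Amaster$). Both hinge on the identical technical facts — that $\mb{E}[\hat\delta_t(a',a)\mid\mc{F}_{t-1}] \le \delta_t(a',a) \le \delta_t(a)$ once $a\in\mc{A}_t$, that eviction from any local $\mc{A}_t$ forces eviction from $\Amaster$ so an arm surviving in $\Amaster$ is in $\mc{A}_t$ at every round, and the concentration event $\mc{E}_1$ — so there is no meaningful difference in the argument's content.
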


\begin{proof}
	Fix an episode $[t_{\ell},t_{\ell+1})$. Then, by Line~\ref{line:evict-master} of Algorithm~\ref{meta-alg}, every arm $a\in [K]$ was evicted from $\mc{A}_{\text{master}}$ at some round $s\in [t_{\ell},t_{\ell+1})$, i.e. \eqref{eq:elim} is true for some interval $[s_1,s_2]\subseteq [t_{\ell},s)$. It suffices to show that this implies arm $a$ incurs significant regret \eqref{eq:bad-arm} on $[s_1,s_2]$.

	Suppose \eqref{eq:elim} triggers the eviction of arm $a$. By \eqref{eq:error-bound} and \eqref{eq:elim}, we have that there is an arm $a'\neq a$ such that (using the notation of Proposition~\ref{prop:concentration}) for large enough $C_0>0$ and some $c_2>0$:
	\begin{equation}\label{eq:large-aggregate-gap}
		\sum_{t=s_1}^{s_2} \mb{E}\left[ \hat{\delta}_t(a',a) \mid \mc{F}_{t-1}\right] \geq c_2 \log(T) \sqrt{K(s_2-s_1) \vee K^2}.
	\end{equation}
	Next, note that by comparing Lines~\ref{line:evict-master} and \ref{line:evict-At}, we see that any arm which is evicted from $\mc{A}_t$ at any round $t \in [t_{\ell},s)$ must have also been evicted from $\mc{A}_{\text{master}}$ in the same round. Thus, if arm $a$ is evicted from $\mc{A}_{\text{master}}$ at round $s$, then we must have that $a\in \mc{A}_t$ for all $t\in [t_{\ell},s)$. Then
	\[
	\mb{E}[  \hat{\delta}_t(a',a) \mid \mc{F}_{t-1} ] = \begin{cases}
			\delta_t(a',a) & a'\in \mc{A}_t\\
			-\mu_t(a) & a'\not\in\mc{A}_t
		\end{cases}
	\]
	In either case, the above L.H.S. expectation is bounded above by $\delta_t(a',a) \leq \delta_t(a)$. Thus, \eqref{eq:large-aggregate-gap} implies arm $a$ incurs significant regret \eqref{eq:bad-arm} on $[s_1,s_2]$ since the L.H.S. is upper-bounded by $\sum_{t=s_1}^{s_2} \delta_t(a)$.
\end{proof}

\subsection{Bounding the Regret Within Each Episode}\label{subsec:episode-regret}

The relative regret of \meta to the last safe arm within each episode $[t_{\ell},t_{\ell+1})$ will be given in terms of the phases it intersects. To this end, we introduce the following new notation.

\begin{defn}
	Let $\textsc{Phases}(t_{\ell},t_{\ell+1}) \doteq \{i\in [\Lsig]:[\tau_i,\tau_{i+1})\cap [t_{\ell},t_{\ell+1}) \neq \emptyset\}$, i.e., denote those phases intersecting episode $\ell$.
\end{defn}

Then, our main claim is as follows, w.r.t. the event $\mc{E}_1$ of Proposition \ref{prop:concentration}:
\begin{equation}\label{eq:regret-episode}
	\mb{E}\left[ \sum_{t=t_{\ell}}^{t_{\ell+1}-1} \delta_t(a_t^\sharp,\pi_t) \right] \leq \frac{1}{T} + c_3\log^{3}(T) \cdot \mb{E}\left[ \pmb{1}\{\mc{E}_1\} \sum_{i \in \textsc{Phases}(t_{\ell},t_{\ell+1})} \sqrt{K\cdot (\tau_{i} - \tau_{i-1})} \right].
\end{equation}
Expectations above are taken over all randomness in both algorithm and environment.
%
We will be comparing the safe arm $a_t^\sharp$ to any arm \meta considered safe throughout the episode.

\begin{defn}[Last Master Arm]
We let $a_\ell$ denote any arm surviving at time $t_{\ell +1}-1$ in $\mc{A}_{\text{master}}$.
\end{defn}

The relative regret to the last safe arm over episode $\ell$ is then decomposed into the following two quantities:
\begin{enumerate}[(a),leftmargin=*]
	\item The relative regret of $\pi_t$ w.r.t. the \emph{last master arm} $a_{\ell}$.\label{item:regret-ell}
	\item The relative regret of the last master arm $a_{\ell}$ to last safe arms $a_t^\sharp$. \label{item:regret-persistent}
\end{enumerate}
Namely, we have:
\begin{equation}
	\mb{E} \left[ \sum_{t=t_{\ell}}^{t_{\ell+1}-1} \delta_t(a_t^\sharp,\pi_t) \right] =
		\underbrace{\mb{E}\left[ \sum_{t=t_{\ell}}^{t_{\ell+1}-1} \delta_t(a_{\ell},\pi_t)\right] }_{\ref{item:regret-ell}} + \underbrace{\mb{E} \left[ \sum_{t=t_{\ell}}^{t_{\ell+1}-1} \delta_t(a_t^\sharp,a_{\ell}) \right] }_{\ref{item:regret-persistent}}.
	\label{eq:episode_decomposition}
\end{equation}
We then proceed to show that each of \ref{item:regret-ell} and \ref{item:regret-persistent} are of order \eqref{eq:regret-episode}.

An {\bf immediate difficulty} is that these two quantities involve interdependencies between the random variables $t_\ell, t_{\ell+1}, \pi_t$ and $a_\ell$, which require careful handling. Our general approach is to first condition on just $t_\ell$, whereby appropriate surrogates for these various quantities can be bounded \emph{pointwise}, i.e., independent of the values of $\pi_t, t_{\ell+1}$ and $a_\ell$. We go over more detailed intuition below.

\paragraph{$\bullet$ Bounding \ref{item:regret-ell}.}
Here we first assume the event ${\cal E}_1$ of Proposition \ref{prop:concentration}, whereby for any $a, a'$ retained together in a given interval of time $[s_1, s_2]$, i.e., $a, a' \in {\cal A}_t, \ \forall t\in [s_1, s_2]$, we have that $\sum_{t=s_1}^{s_2} \delta_t(a', a) \lesssim \sqrt{K(s_2 - s_1)}$. Now, $a_\ell$, by definition, is any $a'$ retained at all rounds in episode $\ell$, while $\delta_t(a', \pi_t) = \sum_{a \in [K]}\delta_t(a', a)\cdot \pmb{1}\{\pi_t = a\} $. We may therefore reduce the problem to bounding relative regrets $\sum_t \delta_t(a', a)\cdot \pmb{1}\{\pi_t = a\}\cdot \pmb{1}\{{\cal E}_1\}$, by carefully considering such intervals of time where $a$ is also retained. In particular, first an interval from $t_\ell$ to $t_\ell(a)$, defined as the round at which $a$ is evicted from $\mc{A}_{\text{master}}$, and then intervals within replays of $\base$ which bring $a$ back into ${\cal A}_t$. Importantly, bounding the regret within replays crucially uses the fact that sufficiently few replays are expected.
{The details are found in Appendix~\ref{subsec:2}.}

\paragraph{$\bullet$ Bounding \ref{item:regret-persistent}.}
This is most involved, a main difficulty arising from the fact that, if arm $a_t^\sharp$ is evicted from $\Amaster$ by some time $s_1$, large aggregate values of $\sum_{t= s_1}^{s_2}\delta_t(a_t^\sharp,a_{\ell})$ may go undetected outside of well-timed replays of \base.
Our main strategy, as discussed at the start of Section \ref{sec:overview} is to divide up every phase $i$ intersecting remaining rounds $[t_\ell, T]$ into {\em bad segments} where $a_{\ell}$ incurs significant regret to arm $a^\sharp_i \doteq a^\sharp_t$, and argue that few such bad segments may occur before a well-timed replay occurs that finally evicts $a_\ell$.


%


Note that such an argument is independent of the episode end time $t_{\ell+1}-1$, so \emph{we only need to condition on $t_\ell$}. A difficulty remains in that $a_{\ell}$ is undetermined till time $t_{\ell+1}$, which we circumvent by defining bad segments w.r.t. any possible arm $a$, and arguing that any such $a$ is evicted in time.




\begin{defn}\label{defn:bad-segment}
	Fix $t_{\ell}$, and let $[\tau_i,\tau_{i+1})$ be any phase intersecting $[t_{\ell},T)$. For any arm $a$, define rounds $s_{i,0}(a),s_{i,1}(a),s_{i,2}(a)\ldots\in [t_{\ell} \vee \tau_{i}, \tau_{i+1})$ recursively as follows: let $s_{i,0}(a) \doteq t_{\ell} \vee \tau_{i}$ and define $s_{i,j}(a)$ as the smallest round in $(s_{i,j-1}(a),\tau_{i+1})$ such that arm $a$ satisfies for some fixed $c_4>0$:
		\begin{equation}\label{eq:segment}
			\sum_{t = s_{i,j-1}(a)}^{s_{i,j}(a)} \delta_{t}(a_i^\sharp,a) \geq c_4 \log(T)\sqrt{K\cdot (s_{i,j}(a) - s_{i,j-1}(a))},
		\end{equation}
		if such a round $s_{i,j}(a)$ exists. Otherwise, we let the $s_{i,j}(a) \doteq \tau_{i+1}-1$. We refer to any interval $[s_{i,j-1}(a),s_{i,j}(a))$ as a {\bf critical segment}, and as a {\bf bad segment} (w.r.t. arm $a$) if \eqref{eq:segment} above holds.
\end{defn}



We can restrict attention to bad segments, since outside of this, any critical segment--in fact, at most one per phase--contributes small regret as \eqref{eq:segment} is reversed. The following proposition, which relates the concentration bound of Proposition \ref{prop:concentration} to \eqref{eq:segment}, establishes crucial guarantees on bad segments.


	\begin{prop}{(proof in Appendix~\ref{subsec:3})}\label{cor:behavior}
		Suppose event $\mc{E}_1$ holds. Let $[s_{i,j}(a),s_{i,j+1}(a))$ be a bad segment with respect to arm $a$. Fix an integer $m \geq s_{i,j+1}(a) - s_{i,j}(a)$. Then:
		\begin{enumerate}[(i)]
			\item No run of $\base(\tstart,m)$ with $\tstart\in  [s_{i,j}(a),s_{i,j+1}(a)]$ {\bf ever evicts arm $a_i^\sharp$}. \label{2a} 

			\item If $a_i^\sharp \in \bigcap_{t= \tilde s}^{s_{i,j+1}(a)}\mc{A}_t$, where $\tilde{s} \doteq \lceil\frac{s_{i,j}(a)+s_{i,j+1}(a)}{2}\rceil$, then arm $a$ is evicted by round $s_{i,j+1}(a)$. \label{2b}
		\end{enumerate}
	\end{prop}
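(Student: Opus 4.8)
The plan is to handle both parts with one device grounded in the event $\mc{E}_1$ of Proposition~\ref{prop:concentration}: whenever an arm $b$ stays in the candidate set throughout an interval $[u_1,u_2]$, the eviction test \eqref{eq:elim} for $b$ on $[u_1,u_2]$ is, up to the additive slack of \eqref{eq:error-bound}, equivalent to a lower bound of order $\log(T)\sqrt{K(u_2-u_1)\vee K^2}$ on the true aggregate gap $\sum_{t=u_1}^{u_2}\mb{E}[\hat\delta_t(b',b)\mid\mc{F}_{t-1}]$ for some witness $b'$; and, exactly as in the proof of Lemma~\ref{lem:counting-eps}, that conditional expectation equals $\delta_t(b',b)$ when $b'\in\mc{A}_t$ and is at most $\delta_t(b)$ otherwise. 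Abbreviate $s_1=s_{i,j}(a)$, $s_2=s_{i,j+1}(a)$, $\ell=s_2-s_1$, $\tilde s=\lceil(s_1+s_2)/2\rceil$. Two facts are used throughout: (a) by Definition~\ref{defn:sig-shift}, since $a_i^\sharp$ is the \emph{last} arm to incur significant regret \eqref{eq:bad-arm} in phase $i$, it incurs no significant regret on any $[u_1,u_2]\subseteq[\tau_i,\tau_{i+1}-1]$, i.e.\ $\sum_{t=u_1}^{u_2}\delta_t(a_i^\sharp)<\sqrt{K(u_2-u_1)}$ there; and (b) since the left side of \eqref{eq:segment} is at most $\ell+1$ while its right side is $c_4\log(T)\sqrt{K\ell}$, any bad segment has $\ell\gtrsim c_4^2 K\log^2(T)\ge K$, which removes small-length edge cases below.

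For part~\ref{2a}, suppose toward a contradiction that a run $\base(\tstart,m)$ with $\tstart\in[s_1,s_2]$ and $m\ge\ell$ evicts $a_i^\sharp$; let $t^\star$ be the first round at which \eqref{eq:elim} holds for $b=a_i^\sharp$, on some interval $[u_1,u_2]$ with witness $a'$. Since $t^\star$ is first, $a_i^\sharp$ was retained everywhere up to $t^\star$, so $a_i^\sharp\in\mc{A}_t$ for $t\in[u_1,u_2]$ and $\mb{E}[\hat\delta_t(a',a_i^\sharp)\mid\mc{F}_{t-1}]\le\delta_t(a_i^\sharp)$. Combining \eqref{eq:elim} with \eqref{eq:error-bound} and using $\sqrt{K(u_2-u_1)\vee K^2}\ge\tfrac12(\sqrt{K(u_2-u_1)}+K)$, for $C_0$ large enough relative to $c_1$ we get $\sum_{t=u_1}^{u_2}\delta_t(a_i^\sharp)\ge(\tfrac{\sqrt{C_0}}{2}-c_1)\log(T)\sqrt{K(u_2-u_1)}>\sqrt{K(u_2-u_1)}$, i.e.\ $a_i^\sharp$ incurs significant regret on $[u_1,u_2]$; by fact (a) this is impossible once we know $[u_1,u_2]\subseteq[\tau_i,\tau_{i+1}-1]$. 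For an eviction via the local criterion (Line~\ref{line:evict-At}) the witnessing interval satisfies $[u_1,u_2]\subseteq[\tstart,t^\star)$, so $u_1\ge\tstart\ge s_1\ge\tau_i$. The remaining two points — that the master criterion (Line~\ref{line:evict-master}, whose intervals start only at $t_\ell\le s_1$) cannot use ``stale'' evidence against $a_i^\sharp$ originating before $\tau_i$, since such evidence would already have evicted $a_i^\sharp$ before round $\tstart$, contradicting its retention; and that the right endpoint $u_2$ can be confined to $\tau_{i+1}-1$ (a run that outlasts phase~$i$ being charged in the episode-level bound rather than here) — are exactly the interval bookkeeping I expect to be the main obstacle.

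For part~\ref{2b}, assume $a_i^\sharp\in\bigcap_{t=\tilde s}^{s_2}\mc{A}_t$; if $a\notin\mc{A}_t$ for some such $t$ we are done, so assume $a,a_i^\sharp\in\mc{A}_t$ for all $t\in[\tilde s,s_2]$, whence $\mb{E}[\hat\delta_t(a_i^\sharp,a)\mid\mc{F}_{t-1}]=\delta_t(a_i^\sharp,a)$ there. By minimality of $s_2$ in Definition~\ref{defn:bad-segment}, the proper prefix $[s_1,\tilde s-1]$ fails \eqref{eq:segment}, so $\sum_{t=s_1}^{\tilde s-1}\delta_t(a_i^\sharp,a)<c_4\log(T)\sqrt{K(\tilde s-1-s_1)}\le c_4\log(T)\sqrt{K\ell/2}$, whereas the full segment gives $\sum_{t=s_1}^{s_2}\delta_t(a_i^\sharp,a)\ge c_4\log(T)\sqrt{K\ell}$; subtracting, $\sum_{t=\tilde s}^{s_2}\delta_t(a_i^\sharp,a)\ge c_4(1-\tfrac1{\sqrt2})\log(T)\sqrt{K\ell}$. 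Passing to the empirical sum via \eqref{eq:error-bound} on $[\tilde s,s_2]$ (of length $\asymp\ell\ge K$, so $\sqrt{K\ell}$ dominates the $K$ term) yields $\sum_{t=\tilde s}^{s_2}\hat\delta_t(a_i^\sharp,a)\ge\big(c_4(1-\tfrac1{\sqrt2})-2c_1\big)\log(T)\sqrt{K\ell}$, which — since $K(s_2-\tilde s)\vee K^2\le K\ell$, and after an $O(1)$ adjustment to work instead on $[\tilde s,s_2-1]$ — exceeds the threshold $\log(T)\sqrt{C_0(K(s_2-\tilde s)\vee K^2)}$ provided $c_4$ is chosen large enough relative to the fixed algorithm constant $C_0$ (and to $c_1$). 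Hence \eqref{eq:elim} for arm $a$ with witness $a'=a_i^\sharp$ fires by round $s_2$; as $t_\ell\le s_1\le\tilde s$, this interval lies in $[t_\ell,s_2)$, so Line~\ref{line:evict-master} applies and $a$ is evicted from $\Amaster$ by round $s_2$, as claimed. Overall I expect part~\ref{2a} — specifically the phase-boundary and local-versus-master-criterion bookkeeping flagged above — to be the delicate part, part~\ref{2b} being the ``subtract the first half, apply concentration, take $c_4\gg C_0$'' computation sketched here.
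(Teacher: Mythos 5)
Your argument follows the same route as the paper's. Part~(ii) reproves Lemma~\ref{lem:detect} of Appendix~\ref{subsec:3} inline (the paper invokes the elementary fact $\sqrt{a+b}-\sqrt a\ge\sqrt b/4$ where you work out the specific constant $1-1/\sqrt2$ for the half-split), then applies \eqref{eq:error-bound} to trigger \eqref{eq:elim}; your observation that bad segments have length $\gtrsim K\log^2T$, which lets you discard the $K^2$ term in the threshold, is correct and a clean way to close that comparison. Part~(i) is the paper's one-line contradiction, and the two pieces of bookkeeping you flag as the likely obstacle are in fact both vacuous — though not for the reasons you sketch. The proposition needs $a_i^\sharp$ to survive in the \emph{local} candidate set $\mc{A}_t$ (that is exactly the hypothesis of part~(ii)), and $\mc{A}_t$ is modified only by Line~\ref{line:evict-At}, whose witness intervals satisfy $[u_1,u_2]\subseteq[\tstart',t)$ with $\tstart'\ge\tstart\ge s_{i,j}(a)\ge\tau_i$ for the currently active base (or any of its children, which start even later). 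Line~\ref{line:evict-master} modifies $\Amaster$ only, never $\mc{A}_t$, so your ``stale evidence'' scenario simply cannot arise; and the resolution you proposed for it (that such evidence would already have evicted $a_i^\sharp$, contradicting retention) does not hold, since every call to $\base$ resets $\mc{A}_t=[K]$ and therefore routinely contains arms long absent from $\Amaster$. The right-endpoint worry dissolves too: the only evictions to preclude are those occurring by round $s_{i,j+1}(a)$, which Definition~\ref{defn:bad-segment} places strictly inside phase $i$, so $[u_1,u_2]\subseteq[\tau_i,\tau_{i+1})$ automatically; as you suspected, the word ``ever'' in the statement slightly overstates what is proved and used — only preservation of $a_i^\sharp$ through round $s_{i,j+1}(a)$ is needed for Corollary~\ref{cor:perfect}.
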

	Note that such an eviction implies crucially an eviction from $\Amaster$. The above guarantees lead to the following notion of \emph{perfect replay}, well-timed to evict arm $a$ over a given bad segment.


	\begin{defn}\label{defn:perfect-replay}
        Given a bad segment $[s_{i,j}(a),s_{i,j+1}(a))$, a {\bf perfect replay} designates a call of \\ $\base(\tstart,m)$ where $\tstart\in [s_{i,j}(a),\tilde{s}]$ (as defined in \ref{2b}) and $m\geq s_{i,j+1}(a) - s_{i,j}(a)$
    \end{defn}

    \begin{cor}\label{cor:perfect}
	    Under the conditions of Proposition~\ref{cor:behavior}, a perfect replay will evict arm $a$ from $\Amaster$.

	\end{cor}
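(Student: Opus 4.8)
The plan is to show that a perfect replay forces the hypothesis of Proposition~\ref{cor:behavior}(ii) to hold, so that its conclusion fires, and then to observe that the resulting eviction is necessarily an eviction from $\Amaster$. Recall the standing assumptions: $\mc{E}_1$ holds, $[s_{i,j}(a),s_{i,j+1}(a))$ is a bad segment with respect to $a$, and $m\ge s_{i,j+1}(a)-s_{i,j}(a)$ is fixed; a perfect replay (Definition~\ref{defn:perfect-replay}) is then a call $\base(\tstart,m)$ with $\tstart\in[s_{i,j}(a),\tilde{s}]$. Since $\tstart\ge s_{i,j}(a)$ we get $\tstart+m\ge s_{i,j+1}(a)$, so this call --- together with any child replays it spawns --- stays on the call stack throughout $[\tilde{s},s_{i,j+1}(a)]$. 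Hence the only missing ingredient for Proposition~\ref{cor:behavior}(ii) is its hypothesis $a_i^\sharp\in\bigcap_{t=\tilde{s}}^{s_{i,j+1}(a)}\mc{A}_t$; granting it, part~(ii) yields that $a$ leaves $\mc{A}_t$ by some round $t\le s_{i,j+1}(a)$, and, as noted just after Proposition~\ref{cor:behavior}, an eviction from $\mc{A}_t$ (triggered via \eqref{eq:elim} over an interval $[s_1,s_2]\subseteq[\tstart,t)\subseteq[t_\ell,t)$) entails an eviction from $\Amaster$ through Line~\ref{line:evict-master} of Algorithm~\ref{base-alg} in the same iteration, which is the claim. (If instead $\Amaster$ empties before round $s_{i,j+1}(a)$, forcing an early RETURN, then $a$ --- like every arm --- has already been dropped from $\Amaster$, and the claim holds trivially.)

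The substance is therefore establishing that $a_i^\sharp$ persists in $\mc{A}_t$ over $[\tilde{s},s_{i,j+1}(a)]$. First, this window lies inside significant phase $i$: $\tstart\ge s_{i,j}(a)\ge\tau_i$ while $s_{i,j+1}(a)<\tau_{i+1}$. When the perfect replay begins it sets $\mc{A}_{\tstart}=[K]\ni a_i^\sharp$, and the only $\base$ instances that modify $\mc{A}_t$ during $[\tstart,s_{i,j+1}(a)]$ are the perfect replay and its descendants (its ancestors stay paused until round $\tstart+m\ge s_{i,j+1}(a)$), each launched at a round $\ge\tstart\ge\tau_i$; hence every interval $[s_1,s_2]$ that can trigger \eqref{eq:elim} in Line~\ref{line:evict-At} satisfies $\tau_i\le s_1<s_2<\tau_{i+1}$. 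Suppose toward a contradiction that $t^*\le s_{i,j+1}(a)$ is the first round at which $a_i^\sharp$ leaves $\mc{A}_t$; then $a_i^\sharp$ was retained throughout the triggering interval $[s_1,s_2]\subseteq[\tau_i,\tau_{i+1}-1]$, so on $\mc{E}_1$ Proposition~\ref{prop:concentration} bounds $\max_{a'}\sum_{t=s_1}^{s_2}\hat{\delta}_t(a',a_i^\sharp)$ by $\sum_{t=s_1}^{s_2}\delta_t(a_i^\sharp)+c_1\log(T)(\sqrt{K(s_2-s_1)}+K)$, using that $\mb{E}[\hat{\delta}_t(a',a_i^\sharp)\mid\mc{F}_{t-1}]\le\delta_t(a_i^\sharp)$ whenever $a_i^\sharp\in\mc{A}_t$. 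Since $a_i^\sharp$ is the last arm to incur significant regret in phase $i$, it incurs no significant regret \eqref{eq:bad-arm} on any sub-interval of $[\tau_i,\tau_{i+1}-1]$, so $\sum_{t=s_1}^{s_2}\delta_t(a_i^\sharp)<\sqrt{K(s_2-s_1)}$; with $C_0$ in \eqref{eq:elim} chosen large enough relative to $c_1$, this contradicts \eqref{eq:elim}. Thus no such $t^*$ exists and $a_i^\sharp$ stays in $\mc{A}_t$ over all of $[\tstart,s_{i,j+1}(a)]\supseteq[\tilde{s},s_{i,j+1}(a)]$, which is precisely the hypothesis needed. (This persistence is essentially the content of Proposition~\ref{cor:behavior}(i), so one could also simply cite it.)

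The step I expect to demand the most care is exactly this persistence argument, for the familiar reason that $\mc{A}_t$ is a shared variable that every nested replay resets to $[K]$: the perfect replay's own non-eviction guarantee does not by itself rule out a short child replay --- one whose duration is below the threshold $s_{i,j+1}(a)-s_{i,j}(a)$ for which Proposition~\ref{cor:behavior}(i) is stated --- transiently ejecting $a_i^\sharp$ from $\mc{A}_t$. The first-eviction argument above is what closes this gap, leveraging that \emph{every} eviction occurring inside the window is forced by an interval confined to phase $i$, where $\mc{E}_1$ certifies $a_i^\sharp$ is safe. The remaining ingredients --- the bookkeeping that the perfect replay spans $[\tilde{s},s_{i,j+1}(a)]$, and the implication that an $\mc{A}_t$-eviction via \eqref{eq:elim} forces an $\Amaster$-eviction --- are routine.
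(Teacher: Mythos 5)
Your proof is correct and takes essentially the same route the paper (implicitly) takes: use the non-eviction of $a_i^\sharp$ over $[\tilde{s}, s_{i,j+1}(a)]$ (part (i)) to satisfy the hypothesis of part (ii), which then evicts $a$, and note that any eviction from $\mc{A}_t$ via \eqref{eq:elim} simultaneously evicts from $\Amaster$ by Line~\ref{line:evict-master}. Your extra care about child replays with durations below $m$ is a fair reading of the literal statement of part (i), but the paper's proof of (i) never uses the lower bound on $m$ and its internal (unexported) argument explicitly extends the non-eviction claim to children, so citing (i) directly would also be acceptable.
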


	All that is left is to show that, for any arm $a$ (in particular, $a=a_{\ell}$), a perfect replay is scheduled with high probability before too many bad segments w.r.t. $a$ elapse. This leads to an immediate bound on the regret of any $a$ to $a_i^\sharp$ over the phases $[\tau_i,\tau_{i+1})$ intersecting episode $[t_{\ell},t_{\ell+1})$. The full details are found in Appendix~\ref{subsec:3}.


	\subsection{Summing the regret over episodes.}\label{subsec:sum-regret}

	{

		Recall from earlier that there are WLOG $T$ total episodes with the convention that $t_{\ell} \doteq T+1$ if only $\ell$ episodes occur by round $T$. Then, summing our episode regret bound \eqref{eq:regret-episode} over $\ell$ gives: 
\begin{align*}
	\sum_{\ell=1}^T \mb{E} \left[ \sum_{t=t_{\ell}}^{t_{\ell+1}-1} \delta_t(a_t^\sharp,\pi_t) \right] &\leq 1 + c_3\log^3(T)\sum_{\ell=1}^T \mb{E}\left[ \pmb{1}\{\mc{E}_1\} \sum_{i \in \textsc{Phases}(t_{\ell},t_{\ell+1})} \sqrt{K\cdot (\tau_i-\tau_{i-1})} \right].
\end{align*}
Recall here that $\mc{E}_1$ is the good event over which the concentration bounds of Proposition~\ref{prop:concentration} hold. Then, using the fact that, on event $\mc{E}_1$, each phase $[\tau_i,\tau_{i+1})$ intersects at most two episodes (Lemma~\ref{lem:counting-eps}), the sum over $\ell\in[T]$ on the above R.H.S. becomes:
\[
	\mb{E}\left[ \pmb{1}\{\mc{E}_1\}\sum_{\ell=1}^T  \sum_{i \in \textsc{Phases}(t_{\ell},t_{\ell+1})} \sqrt{K\cdot (\tau_i - \tau_{i-1})}\right] \leq 2\sum_{i=1}^{\Lsig} \sqrt{K\cdot (\tau_{i+1}-\tau_i)}.
\]
This concludes the proof of Theorem~\ref{thm}.
}

\section{Conclusion}

We have shown that it is possible to adapt optimally to an unknown number of \emph{significant} shifts---a new notion proposed here---resulting in rates always faster than optimal total variation rates, while at the same time resolving the open problem of adaptivity to an unknown number of best arm switches $S$. Our rates can in fact be much faster than when expressed in terms of $S$ or total variation, as the notion of significant shift is considerably milder.

The more general problem of adaptive \emph{switching regret} \citep{foster2020}, where one aims to compete against any sequence of arms (as opposed to the best arm at each round), remains open.

\section*{Acknowledgements}
Samory Kpotufe thanks Google AI Princeton, and the Institute for Advanced Study at Princeton for hosting him during part of this project. He also acknowledges support from NSF:CPS:Medium:1953740 and the Alfred P. Sloan Foundation.


\bibliography{colt.bib}

\begin{thebibliography}{30}
\providecommand{\natexlab}[1]{#1}
\providecommand{\url}[1]{\texttt{#1}}
\expandafter\ifx\csname urlstyle\endcsname\relax
  \providecommand{\doi}[1]{doi: #1}\else
  \providecommand{\doi}{doi: \begingroup \urlstyle{rm}\Url}\fi

\bibitem[Abbasi-Yadkori et~al.(2022)Abbasi-Yadkori, Gy\"{o}rgy, and
  Lazic]{abbasi-yadkori2022}
Yasin Abbasi-Yadkori, Andr\'{a}s Gy\"{o}rgy, and Nevena Lazic.
\newblock A new look at dynamic regret for non-stationary stochastic bandits.
\newblock \emph{arXiv preprint arXiv:2201.06532}, 2022.
\newblock URL \url{https://arxiv.org/pdf/2201.06532.pdf}.

\bibitem[Allesiardo et~al.(2017)Allesiardo, F\'{e}raud, and
  Maillard]{allesiardo2017}
Robin Allesiardo, Rapha\"{e}l F\'{e}raud, and Odalric-Ambrym Maillard.
\newblock The non-stationary stochastic multi-armed bandit problem.
\newblock \emph{International Journal of Data Science and Analytics},
  3\penalty0 (4):\penalty0 267--283, 2017.

\bibitem[Auer(2002)]{auer2002}
Peter Auer.
\newblock Using confidence bounds for exploitation-exploration trade-offs.
\newblock \emph{Journal of Machine Learning Research}, 3:\penalty0 397--422,
  2002.

\bibitem[Auer and Chiang(2016)]{auer2016}
Peter Auer and Chao-Kai Chiang.
\newblock An algorithm with nearly optimal pseudo-regret for both stochastic
  and adversarial bandits.
\newblock \emph{Proceedings of the 29th Conference on Learning Theory, COLT
  2016}, pages 116--120, 2016.

\bibitem[Auer et~al.(2002)Auer, Cesa-Bianchi, Freund, and
  Schapire]{auer2002nonstochastic}
Peter Auer, Nicolo Cesa-Bianchi, Yoav Freund, and Robert~E Schapire.
\newblock The nonstochastic multiarmed bandit problem.
\newblock \emph{SIAM journal on computing}, 32\penalty0 (1):\penalty0 48--77,
  2002.

\bibitem[Auer et~al.(2018)Auer, Gajane, and Ortner]{auer2018}
Peter Auer, Pratik Gajane, and Ronald Ortner.
\newblock Adaptively tracking the best arm with an unknown number of
  distribution changes.
\newblock \emph{14th European Workshop on Reinforcement Learning (EWRL)}, 2018.

\bibitem[Auer et~al.(2019)Auer, Gajane, and Ortner]{auer2019}
Peter Auer, Pratik Gajane, and Ronald Ortner.
\newblock Adaptively tracking the best bandit arm with an unknown number of
  distribution changes.
\newblock \emph{Conference on Learning Theory}, pages 138--158, 2019.

\bibitem[Besson et~al.(2022)Besson, Kaufmann, Maillard, and Seznec]{besson2019}
Lilian Besson, Emilie Kaufmann, Odalric-Ambrym Maillard, and Julien Seznec.
\newblock Efficient change-point detection for tackling piecewise-stationary
  bandits.
\newblock \emph{The Journal of Machine Learning Research}, 23\penalty0
  (77):\penalty0 1--40, 2022.

\bibitem[Beygelzimer et~al.(2011)Beygelzimer, Langford, Li, Reyzin, and
  Schapire]{beygelzimer2011}
Alina Beygelzimer, John Langford, Lihong Li, Lev Reyzin, and Robert~E.
  Schapire.
\newblock Contextual bandit algorithms with supervised learning guarantees.
\newblock \emph{AISTATS}, 2011.

\bibitem[Bubeck and Cesa-Bianchi(2012)]{bubeck2012a}
S\'{e}bastien Bubeck and Nicol\'{o} Cesa-Bianchi.
\newblock Regret analysis of stochastic and nonstochastic multi-armed bandit
  problems.
\newblock \emph{arXiv preprint arXiv:1803.06971}, 5\penalty0 (1), 2012.

\bibitem[Bubeck and Slivkins(2012)]{bubeck2012b}
S\'{e}bastien Bubeck and Aleksandrs Slivkins.
\newblock The best of both worlds: Stochastic and adversarial bandits.
\newblock \emph{The 25th Annual Conference on Learning Theory}, \penalty0
  (42):\penalty0 1--23, 2012.

\bibitem[Chen et~al.(2019)Chen, Lee, Luo, and Wei]{chen2019}
Yifang Chen, Chung-Wei Lee, Haipeng Luo, and Chen-Yu Wei.
\newblock A new algorithm for non-stationary contextual bandits: Efficient,
  optimal and parameter-free.
\newblock In \emph{Conference on Learning Theory}, pages 696--726, 2019.

\bibitem[Cheung et~al.(2019)Cheung, Simchi-Levi, and Zhu]{cheung2019learning}
Wang~Chi Cheung, David Simchi-Levi, and Ruihao Zhu.
\newblock Learning to optimize under non-stationarity.
\newblock In \emph{The 22nd International Conference on Artificial Intelligence
  and Statistics}, pages 1079--1087. PMLR, 2019.

\bibitem[Foster et~al.(2020)Foster, Krishnamurthy, and Luo]{foster2020}
Dylan~J. Foster, Akshay Krishnamurthy, and Haipeng Luo.
\newblock Open problem: Model selection for contextual bandits.
\newblock \emph{33rd Annual Conference on Learning Theory (COLT)}, 2020.

\bibitem[Garivier and Moulines(2011)]{garivier2011}
Aur\'{e}lien Garivier and Eric Moulines.
\newblock On upper-confidence bound policies for switching bandit problems.
\newblock In \emph{Proceedings of the 22nd International Conference on
  Algorithmic Learning Theory}, pages 174--188. ALT 2011, Springer, 2011.

\bibitem[Heidari et~al.(2016)Heidari, Kearns, and Roth]{heidari2016}
Hoda Heidari, Michael Kearns, and Aaron Roth.
\newblock Tight policy regret bounds for improving and decaying bandits.
\newblock \emph{Proceedings of the International Joint Conference on Artificial
  Intelligence (IJCAI)}, pages 1562--1570, 2016.

\bibitem[Kocsis and Szepesv\'{a}ri(2006)]{kocsis2006}
Levente Kocsis and Csaba Szepesv\'{a}ri.
\newblock Discounted ucb.
\newblock \emph{2nd PASCAL Challenges Workshop}, 2006.

\bibitem[Krishnamurthy and Gopalan(2021)]{krishnamurthy}
Ramakrishnan Krishnamurthy and Aditya Gopalan.
\newblock On slowly-varying non-stationary bandits.
\newblock \emph{arXiv preprint: arXiv:2110.12916}, 2021.
\newblock URL \url{https://arxiv.org/pdf/2110.12916.pdf}.

\bibitem[Lattimore and Szepesv\'{a}ri(2020)]{lattimore}
Tor Lattimore and Csaba Szepesv\'{a}ri.
\newblock \emph{Bandit Algoritms}.
\newblock Cambridge University Press, 2020.

\bibitem[Levine et~al.(2017)Levine, Crammer, and Mannor]{levine2017}
Nir Levine, Koby Crammer, and Shie Mannor.
\newblock Rotting bandits.
\newblock \emph{Advanced in Neural Information Processing Systems}, 2017.

\bibitem[Manegueu et~al.(2021)Manegueu, Carpentier, and Yu]{manegueu2021}
Anne~Gael Manegueu, Alexandra Carpentier, and Yi~Yu.
\newblock Generalized non-stationary bandits.
\newblock \emph{arXiv preprint: arXiv:2102.00725}, 2021.
\newblock URL \url{https://arxiv.org/pdf/2102.00725.pdf}.

\bibitem[Marinov and Zimmert(2021)]{zimmert21}
Teodor Marinov and Julian Zimmert.
\newblock The pareto frontier of model selection for general contextual
  bandits.
\newblock \emph{Advances in Neural Information Processing Systems}, 2021.

\bibitem[Mukherjee and Maillard(2019)]{mukherjee2019}
Subhojyoti Mukherjee and Odalric-Ambrym Maillard.
\newblock Distribution-dependent and time-uniform bounds for piecewise i.i.d
  bandits.
\newblock \emph{Reinforcement Learning for Real Life (RL4RealLife) Workshop in
  the 36th International Conference on Mearning Learning}, 2019.

\bibitem[Seldin and Slivkins(2014)]{seldin2014}
Yevgeny Seldin and Aleksandrs Slivkins.
\newblock One practical algorithm for both stochastic and adversarial bandits.
\newblock \emph{Proceedings of the 31st International Conference on Machine
  Learning (ICML)}, 2014.

\bibitem[Seznec et~al.(2019)Seznec, Locatelli, Carpentier, Lazaric, and
  Valko]{seznec2019}
Julien Seznec, Andrea Locatelli, Alexandra Carpentier, Alessandro Lazaric, and
  Michal Valko.
\newblock Rotting bandits are no harder than stochastic ones.
\newblock \emph{Proceedings of the Twenty-Second International Conference on
  Artificial Intelligence and Statistics}, pages 2564--2572, 2019.

\bibitem[Seznec et~al.(2020)Seznec, Menard, Lazaric, and Valko]{seznec2020}
Julien Seznec, Pierre Menard, Alessandro Lazaric, and Michal Valko.
\newblock A single algorithm for both restless and rested rotting bandits.
\newblock \emph{Proceedings of the 22nd International Conference on Artificial
  Intelligence and Statistics (AISTATS)}, 2020.

\bibitem[Slivkins(2019)]{slivkinsbook}
Aleksandrs Slivkins.
\newblock \emph{Introduction to Multi-Armed Bandits}.
\newblock 2019.
\newblock URL \url{https://arxiv.org/abs/1904.07272}.

\bibitem[Slivkins and Upfal(2008)]{slivkins2008}
Alex Slivkins and Eli Upfal.
\newblock Adapting to a changing environment: the brownian restless bandits.
\newblock \emph{21st Conference on Learning Theory (COLT)}, 2008.

\bibitem[Wei and Luo(2021)]{wei2021}
Chen-Yu Wei and Haipeng Luo.
\newblock Non-stationary reinforcement learning without prior knowledge: An
  optimal black-box approach.
\newblock \emph{Proceedings of the 32nd International Conference on Learning
  Theory}, 2021.

\bibitem[Wei and Srivatsva(2018)]{wei-srivastava}
Lai Wei and Vaihbav Srivatsva.
\newblock On abruptly-changing and slowly-varying multiarmed bandit problems.
\newblock \emph{Annual American Control Conference (ACC)}, 2018.

\end{thebibliography}

\newpage
\appendix

\section{Proof of Proposition~\ref{prop:sanity}}\label{app:oracle}

We show a slightly more general version of Proposition~\ref{prop:sanity} below which comes in handy in the proof of Theorem~\ref{thm}.

	\begin{lemma}\label{lem:combine}
		Let $\{\mc{S}_t\}_{t=1}^T$ be a fixed sequence of arm-sets such that $\mc{S}_t \subseteq \mc{G}_t$ and, for any fixed phase $[\tau_i,\tau_{i+1})$, $\mc{S}_t \supseteq \mc{S}_{t+1}$ for all $t\in [\tau_i,\tau_{i+1})$. Let $\pi$ be a procedure which, at each round $t$, plays an arm uniformly at random from $\mc{S}_t$. We then have
	    \[
		    R(\pi,T) \leq \log(K)\sum_{i=0}^{\Lsig} \sqrt{K\cdot (\tau_{i+1}-\tau_i)}.
	\]
    \end{lemma}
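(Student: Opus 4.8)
The plan is to prove Lemma~\ref{lem:combine} by bounding the regret separately on each phase $[\tau_i,\tau_{i+1})$ and then summing, since within a phase the arm-sets $\mc{S}_t$ are nested and contained in the safe sets $\mc{G}_t$. Fix a phase $I = [\tau_i,\tau_{i+1})$ of length $n = \tau_{i+1}-\tau_i$. Since $\pi$ plays uniformly from $\mc{S}_t$, the instantaneous regret at round $t$ is $\frac{1}{|\mc{S}_t|}\sum_{a\in\mc{S}_t}\delta_t(a)$, so the phase regret is $\sum_{t\in I}\frac{1}{|\mc{S}_t|}\sum_{a\in\mc{S}_t}\delta_t(a)$. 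The key structural facts I would exploit are: (i) $\mc{S}_t$ only shrinks over the phase, so $|\mc{S}_t|$ is non-increasing and takes at most $K$ distinct values; (ii) every arm $a\in\mc{S}_t\subseteq\mc{G}_t$ is safe at round $t$, meaning \eqref{eq:bad-arm} fails on every subinterval of $[\tau_i,t]$, and in particular on any subinterval contained in the phase.

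The core estimate is the following. Partition the phase into maximal sub-blocks on which $|\mc{S}_t|$ is constant; say the value is $k$ on a block $[u,v]\subseteq I$. On this block, every played arm is safe, but I want a bound that aggregates over the block. The cleanest route is a ``peeling''/layer-cake argument over the size of $\mc{S}_t$: write $\frac{1}{|\mc{S}_t|} = \sum_{k=1}^{K}\pmb{1}\{|\mc{S}_t|\le k\}\cdot\left(\frac1k - \frac1{k+1}\right) + \frac1{K}\pmb{1}\{|\mc{S}_t|\le K\}$ — more simply, use that $\frac{1}{|\mc{S}_t|}\le \sum_{k=|\mc{S}_t|}^{K}\frac{1}{k(k+1)}\cdot(\text{something})$; concretely the harmonic identity $\frac1{|\mc{S}_t|} = \sum_{k\ge |\mc{S}_t|}\left(\frac1k-\frac1{k+1}\right)$ lets me rewrite the phase regret as a sum over thresholds $k$. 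For each threshold $k$, the set of rounds $t\in I$ with $|\mc{S}_t|\le k$ forms a suffix $[w_k, \tau_{i+1})$ of the phase (by monotonicity of $|\mc{S}_t|$), and the arms played there all lie in $\mc{S}_{w_k}$, which at that point still satisfies the safety condition on the subinterval $[w_k,\tau_{i+1}-1]$: hence for the last arm to become unsafe (or any surviving arm) the aggregate gap over that suffix is $< \sqrt{K\cdot(\tau_{i+1}-w_k)}\le\sqrt{Kn}$. Summing $\sum_a\delta_t(a)$ over the rounds and arms in that layer, and dividing by the appropriate weight, each layer contributes $O(\sqrt{Kn})$, and there are $K$ layers weighted by $\frac1k-\frac1{k+1}$, whose total weight telescopes to $\sum_{k=1}^K(\frac1k-\frac1{k+1})\cdot k \lesssim \log K$ after reorganizing — giving phase regret $\le \log(K)\sqrt{Kn}$.

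Summing over $i=0,\dots,\Lsig$ gives $R(\pi,T)\le \log(K)\sum_{i=0}^{\Lsig}\sqrt{K(\tau_{i+1}-\tau_i)}$, which is exactly the claim; Proposition~\ref{prop:sanity} is the special case $\mc{S}_t = \mc{G}_t$ (noting $\mc{G}_t\supseteq\mc{G}_{t+1}$ within a phase since the condition \eqref{eq:bad-arm} is monotone in the interval). The main obstacle I anticipate is making the layer-cake bookkeeping precise: I need to correctly attribute the rounds where a given arm is the ``newly evicted'' one, ensure that when I restrict to the suffix $[w_k,\tau_{i+1})$ I am invoking safety of an arm \emph{that is actually in $\mc{S}_t$ throughout that suffix} (which requires that arm to survive until at least $w_k$, true because $\mc{S}_t$ is a suffix-nested family and any arm in $\mc{S}_{w_k}$ was in $\mc{S}_t$ for all earlier $t$ in the phase), and to chase the constants so the $\log K$ factor — and not $\log^2 K$ — comes out. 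An alternative and perhaps more transparent implementation is to directly split the phase at the (at most $K-1$) times the set $\mc{S}_t$ loses an arm, bound the regret on the $j$-th such block (where $|\mc{S}_t|=K-j+1$) using safety of the arm evicted at the block's end over the \emph{entire} preceding portion of the phase, and then combine via the inequality $\sum_j \frac{1}{K-j+1}\sqrt{K\cdot\ell_j}\le$ (Cauchy–Schwarz / Jensen over block lengths $\ell_j$ summing to $n$) $\lesssim \log(K)\sqrt{Kn}$; I would likely present whichever of these two is cleaner once the constants are checked.
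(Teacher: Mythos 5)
Your main (layer-cake) plan is sound and, once unwound, performs the same per-arm accounting as the paper --- but the paper gets there more directly by a single swap of the order of summation. Ordering the arms $a=1,\dots,K$ within a phase $[\tau_i,\tau_{i+1})$ by the last round $\tau_i^a$ at which $a\in\mc{S}_t$ (so $\tau_i^1\le\cdots\le\tau_i^K$), the nested structure gives $|\mc{S}_t|\ge K+1-a$ for all $t<\tau_i^a$, and hence
\[
\sum_{t=\tau_i}^{\tau_{i+1}-1}\sum_{a\in\mc{S}_t}\frac{\delta_t(a)}{|\mc{S}_t|}
\;\le\;\sum_{a=1}^K\frac{1}{K+1-a}\sum_{t=\tau_i}^{\tau_i^a-1}\delta_t(a)
\;\le\;\sum_{a=1}^K\frac{\sqrt{K(\tau_i^a-1-\tau_i)}}{K+1-a}
\;\le\;\log(K)\sqrt{K(\tau_{i+1}-\tau_i)},
\]
where the middle inequality uses that $a\in\mc{S}_t\subseteq\mc{G}_t$ throughout $[\tau_i,\tau_i^a)$, so \eqref{eq:bad-arm} fails on $[\tau_i,\tau_i^a-1]$. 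Summing over phases finishes. This sidesteps entirely the ``attribution of newly-evicted arms'' bookkeeping you worried about: once you account per arm instead of per round, no thresholds or suffixes are needed, and the $\log K$ is just the harmonic sum $\sum_{a}(K+1-a)^{-1}$. Your layer-cake identity for $1/|\mc{S}_t|$ reproduces this after a second sum-swap inside each layer, so it is a more elaborate route to the same place, not a genuinely different argument.

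The \emph{alternative} block decomposition you sketch, however, is misstated and does not close. On the block where $|\mc{S}_t|=K-j+1$ (of length $\ell_j$), the regret is $\frac{1}{K-j+1}\sum_{a\in\mc{S}_{\mathrm{block}}}\sum_{t\in\mathrm{block}}\delta_t(a)$; each of the $K-j+1$ arms is safe on that block, so the block regret is at most $\sqrt{K\ell_j}$, not $\frac{1}{K-j+1}\sqrt{K\ell_j}$ --- you dropped the factor equal to the number of arms still alive. With the inflated (incorrect) per-block bound, Cauchy--Schwarz would even give $O(\sqrt{Kn})$ with no $\log K$, which should have been a red flag. With the correct per-block bound $\sqrt{K\ell_j}$, Cauchy--Schwarz over up to $K$ blocks only yields $K\sqrt{n}$, which is worse than the claim. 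The fix is precisely to stop aggregating block-by-block and instead aggregate arm-by-arm over the whole phase, using the cumulative interval $[\tau_i,\tau_i^a-1]$ in \eqref{eq:bad-arm} --- i.e., the display above.
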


    \begin{proof}
	    Fix a phase $[\tau_i,\tau_{i+1})$ and for arm $a\in[K]$, let $\tau_i^a$ be the last round in phase $[\tau_i,\tau_{i+1})$ when $a$ is included in $\mc{S}_t$. WLOG, suppose $\tau_i^1 \leq \tau_i^2 \leq \cdots \leq \tau_i^K$. Then, the regret of a procedure $\pi$ which plays arm $a\in \mc{S}_t$ at round $t\in [\tau_i,\tau_{i+1})$ with probability $1/|\mc{S}_t|$ is:
	\[
		\sum_{t=\tau_i}^{\tau_{i+1}-1} \sum_{a\in \mc{S}_{t}} \frac{\delta_{t}(a)}{|\mc{S}_{t}|} \leq \sum_{a=1}^K \sum_{t=\tau_i}^{\tau_i^a-1} \frac{\delta_{t}(a)}{K+1-a} \leq \sum_{a=1}^K \frac{\sqrt{K(\tau_i^a - 1 - 	\tau_i)}}{K+1-a} \leq \log(K)\sqrt{K(\tau_{i+1} - \tau_i)},
	\]
	where we use the fact that $|\mc{S}_{t}| \geq K+1-a$ for $t < \tau_i^a$. Summing the regret over all phases $[\tau_i,\tau_{i+1})$ gives the desired result.
	\end{proof}

	The proof of Proposition~\ref{prop:sanity} follows by taking $\mc{S}_t=\mc{G}_t$ in Lemma~\ref{lem:combine}.

\section{Details for the Proof of Theorem~\ref{thm}}\label{app:full-proof}

\subsection{Bounding $\mb{E}[\sum_{t=t_{\ell}}^{t_{\ell+1}-1} \delta_t(a_{\ell},\pi_t)]$ (Term \ref{item:regret-ell} of Equation \eqref{eq:episode_decomposition})
}\label{subsec:2}
Following the discussion of Section~\ref{subsec:episode-regret}, we first reduce the problem to bounding the relative regret $\sum_t \delta_t(a',a)$. To relate the quantity $\delta_t(a_{\ell},\pi_t)$ to the relative gap $\delta_t(a_{\ell},a)$ for a fixed arm $a$, we first convert \ref{item:regret-ell} to an alternative form involving the relative regrets to fixed arms.

\begin{prop}
	\begin{equation}\label{eq:condition-arms}
	\mb{E}\left[ \sum_{t=t_{\ell}}^{t_{\ell+1}-1} \delta_t(a_{\ell},\pi_t) \right] =\mb{E}\left[ \sum_{t=t_{\ell}}^{t_{\ell+1}-1} \sum_{a\in\mc{A}_t} \frac{\delta_t(a_{\ell},a)}{|\mc{A}_t|}\right].
	\end{equation}
\end{prop}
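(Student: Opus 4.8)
The plan is to expand $\delta_t(a_\ell,\pi_t)$ by conditioning on the internal randomness of the policy at round $t$. Recall from Line~\ref{line:play} of Algorithm~\ref{base-alg} that at each round $t$, the active base algorithm plays an arm drawn uniformly at random from the current candidate set $\mc{A}_t$, independently of the past given $\mc{F}_{t-1}$. Thus, writing the instantaneous relative regret as $\delta_t(a_\ell,\pi_t) = \sum_{a\in\mc{A}_t}\delta_t(a_\ell,a)\cdot\pmb{1}\{\pi_t=a\}$, I would take the conditional expectation given the $\sigma$-field generated by everything prior to the pull at round $t$ — in particular, a $\sigma$-field that determines $t_\ell$, $t_{\ell+1}$, $a_\ell$, $\mc{A}_t$, and $\delta_t(\cdot,\cdot)$ — and use $\mb{P}(\pi_t = a \mid \cdot) = 1/|\mc{A}_t|$ for $a\in\mc{A}_t$.

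The one subtlety worth flagging is that $a_\ell$ and $t_{\ell+1}$ are \emph{not} $\mc{F}_{t-1}$-measurable (they depend on the future of the episode), so I cannot naively condition on $\mc{F}_{t-1}$ alone. The clean way around this is to condition on the larger $\sigma$-field $\mc{G}_{t-1}$ generated by $\mc{F}_{t-1}$ together with all the pre-sampled replay schedule variables $\{B_{s,m}\}$; since the restart times, the identity $a_\ell$ of the last surviving master arm, and the candidate set $\mc{A}_t$ are all determined by the arms/rewards observed strictly before round $t$ and by the (already drawn) replay schedule, they are $\mc{G}_{t-1}$-measurable, whereas the fresh uniform draw $\pi_t$ from $\mc{A}_t$ is still independent of $\mc{G}_{t-1}$ given $\mc{A}_t$. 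Hence
\[
	\mb{E}\big[\delta_t(a_\ell,\pi_t)\cdot\pmb{1}\{t_\ell\le t< t_{\ell+1}\}\ \big|\ \mc{G}_{t-1}\big] = \pmb{1}\{t_\ell\le t< t_{\ell+1}\}\sum_{a\in\mc{A}_t}\frac{\delta_t(a_\ell,a)}{|\mc{A}_t|}.
\]
Summing this over $t$ and applying the tower property yields \eqref{eq:condition-arms}; the indicator $\pmb{1}\{t_\ell\le t<t_{\ell+1}\}$ is what makes the sum over $t$ range precisely from $t_\ell$ to $t_{\ell+1}-1$, and it too is $\mc{G}_{t-1}$-measurable.

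The main obstacle here is purely a measurability bookkeeping one — making sure the $\sigma$-field chosen simultaneously (i) determines $a_\ell$, $t_\ell$, $t_{\ell+1}$, and $\mc{A}_t$, and (ii) leaves $\pi_t$ conditionally uniform on $\mc{A}_t$. Once the right filtration is in hand, the identity is just linearity of expectation and the fact that each term of the inner sum $\sum_{a\in\mc{A}_t}\delta_t(a_\ell,a)/|\mc{A}_t|$ is the conditional mean of $\delta_t(a_\ell,\pi_t)$. No concentration or combinatorial argument is needed at this step; the real work (bounding the right-hand side of \eqref{eq:condition-arms} by carefully tracking intervals where both $a_\ell$ and each $a$ are retained, and controlling the number of replays) is deferred to the remainder of Appendix~\ref{subsec:2}.
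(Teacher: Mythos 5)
Your plan is in spirit the same as the paper's own: expand over $a \in \mc{A}_t$, condition on a suitable $\sigma$-field, and use that $\pi_t$ is uniform on $\mc{A}_t$. The difficulty lies precisely in the subtlety you flagged and then dismissed: the claim that $a_\ell$ is $\mc{G}_{t-1}$-measurable is false. By definition $a_\ell$ is an arm still surviving in $\Amaster$ at time $t_{\ell+1}-1$, and which arms survive depends on every eviction during the episode, hence on the arms played and rewards observed at rounds $t, t+1, \ldots, t_{\ell+1}-1$. None of that information is in $\mc{G}_{t-1}$; the pre-sampled replay schedule $\{B_{s,m}\}$ determines only \emph{when} replays occur, not which arms those replays will evict. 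As a result the displayed conditional-expectation identity is not even well-formed: its right-hand side depends on $a_\ell$ and is therefore not $\mc{G}_{t-1}$-measurable, so it cannot be the conditional expectation given $\mc{G}_{t-1}$. (Your other claims are fine --- $\pmb{1}\{t_\ell \le t < t_{\ell+1}\}$ and $\mc{A}_t$ \emph{are} $\mc{G}_{t-1}$-measurable, and the fresh uniform draw is independent of $\mc{G}_{t-1}$ given $\mc{A}_t$ --- the issue is specific to $a_\ell$.)

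To be fair, the paper's own proof has the same lacuna as literally written: it passes from $\mb{E}[\delta_t(a_\ell,\pi_t)\mid\mc{F}_{t-1},t_\ell]$ to $\sum_{a\in\mc{A}_t}\delta_t(a_\ell,a)\cdot\mb{P}(\pi_t=a\mid\mc{F}_{t-1},t_\ell)$, tacitly treating $a_\ell$ as $\mc{F}_{t-1}$-measurable. The identity \eqref{eq:condition-arms} is nonetheless true, and the clean way to see it is to notice that the $a_\ell$-dependence cancels \emph{before} any conditioning is needed. Since $\delta_t(a_\ell,a)=\mu_t(a_\ell)-\mu_t(a)$ and $\sum_{a\in\mc{A}_t}\mu_t(a_\ell)/|\mc{A}_t|=\mu_t(a_\ell)$, the difference of the two sides of \eqref{eq:condition-arms} equals
\[
\mb{E}\left[\sum_{t=t_\ell}^{t_{\ell+1}-1}\sum_{a\in\mc{A}_t}\frac{\mu_t(a)}{|\mc{A}_t|}\right] - \mb{E}\left[\sum_{t=t_\ell}^{t_{\ell+1}-1}\mu_t(\pi_t)\right],
\]
in which $a_\ell$ no longer appears. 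This vanishes by exactly the conditioning argument you sketched, now applied to $\mu_t(\pi_t)$ rather than $\delta_t(a_\ell,\pi_t)$: for each $t$ the indicator $\pmb{1}\{t_\ell\le t<t_{\ell+1}\}$, the set $\mc{A}_t$, and the function $\mu_t(\cdot)$ are all $\mc{G}_{t-1}$-measurable, and $\mb{E}[\mu_t(\pi_t)\mid\mc{G}_{t-1}]=\sum_{a\in\mc{A}_t}\mu_t(a)/|\mc{A}_t|$; the tower property then finishes it. So the fix is small --- split $\delta_t$ and cancel the non-measurable part first --- but without it your argument, taken literally, does not close.
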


\begin{proof}
This indeed follows from first conditioning on on $t_{\ell}$ and carefully applying tower property:
\begin{align*}
	\mb{E}\left[ \sum_{t=t_{\ell}}^{t_{\ell+1}-1} \delta_t(a_{\ell},\pi_t) \right] &= \mb{E}_{t_{\ell}}\left[ \mb{E}\left[ \sum_{t=t_{\ell}}^{t_{\ell+1}-1} \delta_t(a_{\ell},\pi_t) \mid t_{\ell} \right] \right]\\
										       &= \mb{E}_{t_{\ell}}\left[ \sum_{t=t_{\ell}}^T \mb{E}\left[ \delta_t(a_{\ell},\pi_t)\cdot \pmb{1}\{t < t_{\ell+1}\} \mid t_{\ell}\right] \right]\\
										       &= \mb{E}_{t_{\ell}}\left[ \sum_{t=t_{\ell}}^T \mb{E}[ \pmb{1}\{t<t_{\ell+1}\} \cdot \mb{E}[ \delta_t(a_{\ell},\pi_t) \mid \mc{F}_{t-1},t_{\ell}] \mid t_{\ell}] \right],
\end{align*}
where we use the fact that $\pmb{1}\{t<t_{\ell+1}\}$ is constant conditional on $\mc{F}_{t-1}$. Next, the innnermost expectation on the above R.H.S. is:
\[
	\mb{E}[\delta_t(a_{\ell},\pi_t) \mid \mc{F}_{t-1},t_{\ell}] = \sum_{a\in\mc{A}_t} \delta_t(a_{\ell},a)\cdot \mb{P}(\pi_t\text{ chooses }a \mid \mc{F}_{t-1},t_{\ell}) = \sum_{a\in\mc{A}_t} \frac{\delta_t(a_{\ell},a)}{|\mc{A}_t|}.
\]
Plugging the above into our earlier chain of expectations and unconditioning then gives us \eqref{eq:condition-arms}.
\end{proof}

%
%
Next, we condition on the good event $\mc{E}_1$ on which recall the concentration bounds of Proposition~\ref{prop:concentration} hold. We also further decompose \eqref{eq:condition-arms} by partitioning the rounds $t$ to those before arm $a$ is evicted from $\mc{A}_{\text{master}}$ and those after. Suppose arm $a$ is evicted from $\mc{A}_{\text{master}}$ at round $t_{\ell}^a\in [t_{\ell},t_{\ell+1})$. In particular, this means arm $a\in\mc{A}_t$ for all $t \in [t_{\ell},t_{\ell}^a)$. Thus, it suffices to bound:
\begin{equation}\label{eq:before-master-evict}
	\mb{E}\left[ \pmb{1}\{\mc{E}_1\}\cdot \left( \sum_{a=1}^K \sum_{t=t_{\ell}}^{t_{\ell}^a-1} \frac{\delta_t(a_{\ell},a)}{|\mc{A}_t|} + \sum_{a=1}^K \sum_{t=t_{\ell}^a}^{t_{\ell+1}-1}  \frac{\delta_t(a_{\ell},a)}{|\mc{A}_t|} \cdot \pmb{1}\{a\in \mc{A}_t\} \right) \right].
\end{equation}
Suppose WLOG that $t_{\ell}^1\leq t_{\ell}^2 \leq \cdots \leq t_{\ell}^K$. Then, for each round $t < t_{\ell}^a$ all arms $a'\geq a$ are retained in $\mc{A}_{\text{master}}$ and thus retained in the candidate arm set $\mc{A}_t$. Thus, $|\mc{A}_t| \geq K+1-a$ for all $t\leq t_{\ell}^a$.

Next, we bound the first double sum in \eqref{eq:before-master-evict}, i.e. the regret of playing $a$ to $a_{\ell}$ from $t_{\ell}$ to $t_{\ell}^a-1$, Applying Proposition~\ref{prop:concentration}, since arm $a$ is not evicted from $\mc{A}_t$ till round $t_{\ell}^a$, on event $\mc{E}_1$ we have for some $c_5>0$ and any other arm $a'\in \Amaster$ through round $t_{\ell}^a-1$ (i.e., $a'\in \mc{A}_t$ for all $t\in [t_{\ell},t_{\ell}^a)$):
\[
	\sum_{t=t_{\ell}}^{t_{\ell}^a-1} \mb{E}[ \hat{\delta}_t(a',a)\mid \mc{F}_{t-1}] \leq c_5 \log(T)\sqrt{K(t_{\ell}^a-t_{\ell})\vee K^2}.
\]
Next, since $a,a'\in \mc{A}_t$ for each $t\in [t_{\ell},t_{\ell}^a)$, we have:
\[
	\forall t\in [t_{\ell},t_{\ell}^a):\mb{E}[ \hat{\delta}_t(a',a) \mid \mc{F}_{t-1}] = \delta_t(a',a).
\]
Thus, we conclude for any such $a'$:
\[
	\sum_{t=t_{\ell}}^{t_{\ell}^a-1} \delta_t(a',a) \leq c_5\log(T)\sqrt{K(t_{\ell}^a - t_{\ell}) \vee K^2} \implies \sum_{t=t_{\ell}}^{t_{\ell}^a-1} \frac{\delta_t(a',a)}{|\mc{A}_t|} \leq \frac{c_5\log(T)\sqrt{K(t_{\ell}^a-t_{\ell})\vee K^2}}{K+1-a},
\]
where the second inequality appeals to $|\mc{A}_t|\geq K+1-a$ for all $t\in [t_{\ell},t_{\ell}^a)$. Since this last bound holds uniformly for all $a'\in\Amaster$ at round $t_{\ell}^a-1$, it must hold for the last master arm $a_{\ell}$. In particular,
\[
	\sum_{t=t_{\ell}}^{t_{\ell}^a-1} \frac{\delta_t(a_{\ell},a)}{|\mc{A}_t|} \leq \max_{a'\in \mc{A}_{t_{\ell}^a-1}} \sum_{t=t_{\ell}}^{t_{\ell}^a-1} \frac{\delta_t(a',a)}{|\mc{A}_t|} \leq \frac{c_5\log(T)\sqrt{K(t_{\ell}^a-t_{\ell})\vee K^2}}{K+1-a}.
\]
Then, summing the above R.H.S. over all arms $a$, we have on event $\mc{E}_1$:
\[
	\sum_{a=1}^K \sum_{t=t_{\ell}}^{t_{\ell}^a-1} \frac{\delta_t(a_{\ell},a)}{|\mc{A}_t|} \leq c_5\log(K)\log(T)\sqrt{K(t_{\ell+1}-t_{\ell}) \vee K^2}.
\]

Next, we handle the second double sum in \eqref{eq:before-master-evict}. We first observe that if arm $a$ is played after round $t_{\ell}^a$, then it must due to an active replay. The difficulty here is that replays may interrupt each other and so care must be taken in managing the relative regret contribution $\sum_t\delta_t(a_{\ell},a)$ (which may be negative) of different overlapping replays.

%
%

Our strategy is to partition the rounds when a given arm $a$ is played by a replay after round $t_{\ell}^a$ according to which replay is active and not accounted for by another replay. This will allow us to isolate the relative regret contributions $\delta_t(a_{\ell},a)$ to \eqref{eq:before-master-evict} of a given replay.

For this purpose, we define the following notation.

\begin{defn}
Recall that the Bernoulli $B_{s,m}$ (see Line~\ref{line:add-replay} of Algorithm~\ref{meta-alg}) decides whether $\base(s,m)$ is scheduled (and hence has the chance to become active at all).

Call a replay $\base(s,m)$ {\bf proper} if there is no other activated replay $\base(s',m')$ such that $[s,s+m] \subset (s',s'+m')$ where $\base(s',m')$ will become active again after round $s+m$. In other words, a proper replay is not scheduled inside the scheduled range of rounds of another replay. For each $\base(s,m)$, let $M(s,m,a)$ be the last round in $[s,s+m]$ when arm $a$ is retained by $\base(s,m)$ and all of its children.
Furthermore, let $\textsc{Proper}(t_{\ell},t_{\ell+1})$ be the set of proper replays scheduled to start before round $t_{\ell+1}$. Let $\mc{R}(a)$ be the set of scheduled replays $\base(s,m)$ such that its parent $\base$ has evicted arm $a$ before round $s$.
\end{defn}

\begin{figure}
	\centering
	\includegraphics[scale=0.5]{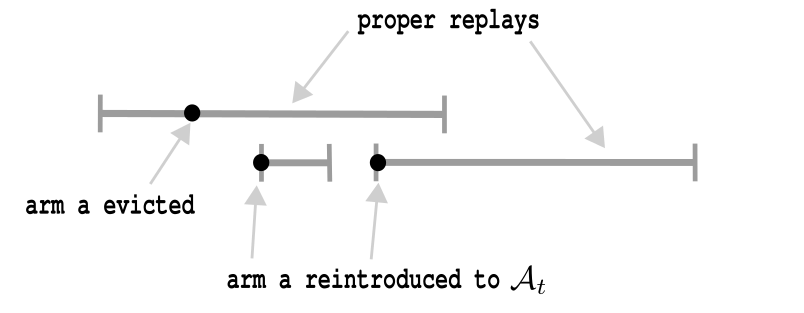}
	\caption{\noindent \small
		Shown are various replay scheduled durations (gray segments) with indications of when a given arm $a$ may be reintroduced to $\mc{A}_t$.
	}
	\label{fig:proper}
\end{figure}

We first observe that for each round $t$ when a replay is active, there is a unique proper replay associated to $t$. This is either the replay active at round $t$ or the proper replay which was scheduled most recently before $t$.
Then, for a fixed arm $a$, we partition the set of rounds $t>t_{\ell}^a$ where arm $a\in\mc{A}_t$ (and hence could be played) into intervals, each initiated by the start of a unique $\base(s,m)\in \textsc{Proper}(t_{\ell},t_{\ell+1})\cup \mc{R}(a)$ and terminated when arm $a$ is evicted or when the replay in question terminates, i.e. at round $M(s,m,a)$. Note that in this partition of rounds, it is not enough to merely consider the scheduling of proper replays $\base(s,m)\in \textsc{Proper}(t_{\ell},t_{\ell+1})$ since non-proper children of proper replays may resample arm $a$ after their parent evicts $a$. It is also not enough to consider just the schedules of replays in $\mc{R}(a)$, which reintroduce arm $a$, since a proper replay may not evict $a$ before being pre-empted by another proper replay. This delicate accounting of the rounds where arm $a$ is resampled is demonstrated in Figure~\ref{fig:proper}.

Then, the second double sum in \eqref{eq:before-master-evict} can be decomposed into a sum over such replays and then a sum over the rounds where each such replay retains arm $a$. In other words, we can write the second double sum of \eqref{eq:before-master-evict} as:
\[
	\sum_{a=1}^K \sum_{\base(s,m)\in\textsc{Proper}(t_{\ell},t_{\ell+1})\cup \mc{R}(a)} \pmb{1}\{B_{s,m}=1\} \sum_{t=s\vee t_{\ell}^a}^{M(s,m,a)} \frac{\delta_t(a_{\ell},a)}{|\mc{A}_t|}.
\]
Further bounding the sum over $t$ above by its positive part, we can sum over all $\base(s,m)$, or obtain:
\begin{equation}\label{eq:positive-part}
	\sum_{a=1}^K \sum_{\base(s,m)} \pmb{1}\{B_{s,m}=1\}\left( \sum_{t=s\vee t_{\ell}^a}^{M(s,m,a)} \frac{\delta_t(a_{\ell},a)}{|\mc{A}_t|}\cdot\pmb{1}\{a\in\mc{A}_t\} \right)_+ ,
\end{equation}

where the sum is over all replays $\base(s,m)$, i.e. $s\in \{t_{\ell}+1,\ldots,t_{\ell+1}-1\}$ and $m\in \{2,4,\ldots,2^{\lceil \log(T)\rceil}\}$.
It then remains to bound the contributed relative regret of each $\base(s,m)$ in the interval $[s\vee t_{\ell}^a,M(s,m,a)]$, which will follow similarly to the previous steps. Fix $s,m$ and suppose $t_{\ell}^a+1 \leq M(s,m,a)$ since otherwise $\base(s,m)$ contributes no regret in \eqref{eq:positive-part}.

Then, following similar reasoning as before, i.e. combining our concentration bound \eqref{eq:error-bound} with the eviction criterion \eqref{eq:elim}, we have for a fixed arm $a$:
\[
	\sum_{t=s \vee t_{\ell}^{a}}^{M(s,m,a)} \frac{\delta_t(a_{\ell},a)}{|\mc{A}_t|} \leq \frac{c_5\log(T)\sqrt{Km \vee K^2}}{\min_{t\in [s,M(s,m,a)]} |\mc{A}_t|},
\]

Plugging this into \eqref{eq:positive-part} and switching the ordering of the outer double sum, we obtain
\[
	\sum_{\base(s,m)} c_5\log(T)\sqrt{Km \vee K^2} \sum_{a=1}^K \frac{1}{\min_{t\in [s,M(s,m,a)]} |\mc{A}_t|}.
\]
We claim this inner sum over $a$ is at most $\log(K)$. For a fixed $\base(s,m)$, if $a_k$ is the $k$-th arm in $[K]$ to be evicted by $\base(s,m)$ or any of its children, then $\min_{t\in [s,M(s,m,a_k)]} |\mc{A}_t| \geq K+1-k$. Thus, our claim follows follows from $\sum_{k=1}^K \frac{1}{K+1-k} \leq \log(K)$.

Let $R(m) \doteq  c_5\log(K)\log(T)\sqrt{Km\vee K^2}$ which is the bound we've obtained so far on the relative regret for a single $\base(s,m)$. Then,  plugging $R(m)$ into \eqref{eq:positive-part} gives:
\begin{align*}
	\mb{E}&\left[ \pmb{1}\{\mc{E}_1\} \sum_{a=1}^K \sum_{t=t_{\ell}^a}^{t_{\ell+1}-1}  \frac{\delta_t(a_{\ell},a)}{|\mc{A}_t|} \cdot \pmb{1}\{a\in \mc{A}_t\} \right] \leq \mb{E}_{t_{\ell}}\left[ \mb{E}\left[ \sum_{\base(s,m)} \pmb{1}\{B_{s,m}=1\}\cdot R(m) \mid t_{\ell} \right] \right]\\
																					  &=  \mb{E}_{t_{\ell}}\left[ \sum_{s=t_{\ell+1}}^T \sum_m  \mb{E}[ \pmb{1}\{B_{s,m}=1\}\cdot \pmb{1}\{s<t_{\ell+1}\} \mid t_{\ell}] \cdot R(m) \right].
\end{align*}
Next, we observe that $B_{s,m}$ and $\pmb{1}\{s<t_{\ell+1}\}$ are independent conditional on $t_{\ell}$ since $\pmb{1}\{s<t_{\ell+1}\}$ only depends on the scheduling and observations of base algorithms scheduled before round $s$. Thus, recalling that $\mb{P}(B_{s,m}=1) = 1/\sqrt{m\cdot (s-t_{\ell})}$,
\begin{align*}
	\mb{E}[ \pmb{1}\{B_{s,m}=1\}\cdot \pmb{1}\{s<t_{\ell+1}\} \mid t_{\ell}] &=  \mb{E}[ \pmb{1}\{B_{s,m}=1\} \mid t_{\ell}] \cdot \mb{E}[ \pmb{1}\{s<t_{\ell+1}\} \mid t_{\ell}]\\
										 &= \frac{1}{\sqrt{m\cdot (s-t_{\ell})}} \cdot \mb{E}[ \pmb{1}\{ s < t_{\ell+1}\} \mid t_{\ell}].
\end{align*}
Plugging this into our expectation from before and unconditioning, we obtain:
\begin{equation}\label{eq:regret-replay}
	\mb{E}\left[ \sum_{s=t_{\ell}+1}^{t_{\ell+1}-1} \sum_{n=1}^{\lceil \log(T)\rceil} \frac{1}{\sqrt{2^n \cdot (s - t_{\ell})}} \cdot R(2^n) \right] \leq c_6\log^3(T) \mb{E}_{t_{\ell},t_{\ell+1}} \left[  \sqrt{K(t_{\ell+1} - t_{\ell}) \vee K^2}\right].
\end{equation}
Then, to bound \ref{item:regret-ell}, it suffices to bound $\sqrt{K(t_{\ell+1}-t_{\ell})\vee K^2}$. First, we claim that every phase $[\tau_i,\tau_{i+1})$ is length at least $K/4$. Observe by our notion of significant regret, that an arm $a$ incurring significant regret on the interval $[s_1,s_2]$ means
\[
	\sum_{t=s_1}^{s_2} \delta_t(a) \geq \sqrt{K\cdot (s_2-s_1)} \implies 2\cdot (s_2-s_1) \geq \sqrt{K\cdot (s_2 - s_1)} \implies s_2 - s_1 \geq K/4.
\]
Thus, each significant phase (Definition~\ref{defn:sig-shift}) must be at least $K/4$ rounds long meaning $\tau_{i+1}-\tau_i = (\tau_{i+1} - \tau_i) \vee K/4$.

Showing \ref{item:regret-ell} is order \eqref{eq:regret-episode} then follows from conditioning again on $\mc{E}_1$ in \eqref{eq:regret-replay} and upper bounding $t_{\ell+1}-t_{\ell}$ by the combined length of all phases $[\tau_i,\tau_{i+1})$ intersecting episode $[t_{\ell},t_{\ell+1})$.

\subsection{Bounding $\mb{E}[ \sum_{t=t_{\ell}}^{t_{\ell+1}-1} \delta_t(a_t^\sharp,a_{\ell})]$ (Term \ref{item:regret-persistent} of Equation \eqref{eq:episode_decomposition})}\label{subsec:3}

	To show Proposition~\ref{cor:behavior}, we first need an elementary lemma which only depends on the definition of significant shift (Definition~\ref{defn:sig-shift}).

	\begin{lemma}\label{lem:detect}
		Let $[s_{i,j}(a),s_{i,j+1}(a))$ be a bad segment, defined with respect to arm $a$. Then
		\begin{equation}\label{eq:relative-gap-detect}
			\sum_{t=\left\lceil\frac{s_{i,j}(a)+s_{i,j+1}(a)}{2}\right\rceil}^{s_{i,j+1}(a)} \delta_{t}(a_i^\sharp,a) \geq \frac{c_4}{4} \log(T)\sqrt{ K\left( s_{i,j+1}(a) - \left(\frac{s_{i,j}(a)+s_{i,j+1}(a)}{2}\right)\right)}.
		\end{equation}
	\end{lemma}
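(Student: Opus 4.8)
The plan is a short split-and-subtract argument that uses only Definition~\ref{defn:bad-segment}. Abbreviate $s_1 \doteq s_{i,j}(a)$, $s_2 \doteq s_{i,j+1}(a)$, $L \doteq s_2 - s_1$, and $\bar s \doteq \lceil (s_1+s_2)/2\rceil$, so that \eqref{eq:relative-gap-detect} is exactly the claim $\sum_{t=\bar s}^{s_2}\delta_t(a_i^\sharp,a) \ge \frac{c_4}{4}\log(T)\sqrt{K L/2}$. I would first note that, because $\delta_t(a_i^\sharp,a)\le 1$ pointwise, the bad-segment inequality \eqref{eq:segment} applied to $[s_1,s_2)$ — namely $\sum_{t=s_1}^{s_2}\delta_t(a_i^\sharp,a) \ge c_4\log(T)\sqrt{KL}$ — forces $L \ge c_4^2\log^2(T)\,K$; in particular $L > 2$, so $\bar s - 1$ is a legitimate interior round, i.e. $s_1 < \bar s - 1 < s_2$. (The only configurations excluded here have $L$ below an absolute constant, which cannot occur under the running assumptions that $T \ge K$ and $T$ exceeds an absolute threshold.)

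Two facts then suffice. First, the whole-interval lower bound $\sum_{t=s_1}^{s_2}\delta_t(a_i^\sharp,a) \ge c_4\log(T)\sqrt{KL}$ is just \eqref{eq:segment} for the bad segment $[s_1,s_2)$. Second, a matching upper bound on the first half comes from the \emph{minimality} built into Definition~\ref{defn:bad-segment}: since $s_2 = s_{i,j+1}(a)$ is the \emph{smallest} round in $(s_1,\tau_{i+1})$ at which \eqref{eq:segment} holds, every $s' \in (s_1,s_2)$ must satisfy $\sum_{t=s_1}^{s'}\delta_t(a_i^\sharp,a) < c_4\log(T)\sqrt{K(s'-s_1)}$. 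Evaluating this at $s' = \bar s - 1$ and using the elementary bound $\bar s - 1 - s_1 < L/2$ (a property of the ceiling) yields $\sum_{t=s_1}^{\bar s-1}\delta_t(a_i^\sharp,a) < c_4\log(T)\sqrt{K L/2}$.

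Subtracting the second estimate from the first gives $\sum_{t=\bar s}^{s_2}\delta_t(a_i^\sharp,a) \ge c_4\log(T)\sqrt{KL}\bigl(1 - 1/\sqrt2\bigr)$, and since $1 - 1/\sqrt2 \ge 1/4$ while $\sqrt{L} \ge \sqrt{L/2} = \sqrt{s_2 - (s_1+s_2)/2}$, this is at least $\frac{c_4}{4}\log(T)\sqrt{K\bigl(s_2 - (s_1+s_2)/2\bigr)}$, which is \eqref{eq:relative-gap-detect}. I do not expect a genuine obstacle: the statement is purely combinatorial and independent of the algorithm's behavior, following from the recursive definition of the critical/bad segments alone. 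The only point requiring care is the bookkeeping around the ceiling $\lceil\cdot\rceil$ — that $\bar s - 1$ is a valid interior value of $s'$ in the minimality statement and that $\bar s - 1 - s_1 < L/2$ — both of which become trivial once $L$ is known to be large.
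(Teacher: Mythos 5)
Your proof is correct and takes essentially the same split-and-subtract route as the paper: decompose the second-half sum as the full bad-segment sum minus the first-half sum, lower-bound the former by the defining inequality \eqref{eq:segment}, and upper-bound the latter via the minimality of $s_{i,j+1}(a)$ in Definition~\ref{defn:bad-segment}. The only differences are cosmetic — the paper concludes with the elementary fact $\sqrt{a+b}-\sqrt{a}\ge\sqrt{b}/4$ for $b\ge a\ge 0$, while you use $\bar s-1-s_1<L/2$ and $1-1/\sqrt2\ge 1/4$ explicitly, and you additionally flag (and correctly dismiss, using the boundedness of rewards) the degenerate case $\bar s-1=s_1$, which the paper's write-up passes over silently.
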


				\begin{proof}{(of Lemma~\ref{lem:detect})}
					Let $\tilde{s} \doteq \left\lceil\frac{s_{i,j}(a) + s_{i,j+1}(a)}{2}\right\rceil$ be the midpoint between $s_{i,j}(a)$ and $s_{i,j+1}(a)$. Then, we have by  \eqref{eq:segment} in the construction of the $s_{i,j}(a)$'s (Definition~\ref{defn:bad-segment}) that:
		\begin{align*}
			\sum_{t = \tilde{s}}^{s_{i,j+1}(a)} \delta_{t}(a_i^\sharp,a) &= 
			\sum_{t = s_{i,j}(a)}^{s_{i,j+1}(a)}  \delta_{t}(a_i^\sharp,a)
			- \sum_{t=s_{i,j}(a)}^{\tilde{s}-1} \delta_t(a_i^\sharp,a)\\
									     &\geq c_4 \log(T)\sqrt{K}\left(\sqrt{s_{i,j+1}(a) - s_{i,j}(a)} - \sqrt{\tilde{s} -1 - s_{i,j}(a)}\right) \\
									     &\geq \frac{c_4}{4}\log(T) \sqrt{K (s_{i,j+1}(a) - \tilde{s})}, \numberthis\label{eq:elem-ineq}
		\end{align*}
		where the last step is from the elementary fact $\sqrt{a+b}-\sqrt{a} \geq \sqrt{b}/4$ for any $b\geq a \geq 0$.
	\end{proof}

	\begin{proof}{(of Proposition~\ref{cor:behavior})}
		Suppose event $\mc{E}_1$ (i.e., our concentration bound \eqref{eq:error-bound}) holds. For \ref{2a}, if $\base(\tstart,m)$ with $\tstart \geq s_{i,j}(a)$ evicts arm $a_i^\sharp$ before round $s_{i,j+1}(a)$, then arm $a_i^\sharp$ incurs significant regret \eqref{eq:bad-arm} on a subinterval of $[s_{i,j}(a),s_{i,j+1}(a)]$, which is a contradiction to the definition of arm $a_i^\sharp$.

For \ref{2b}, we first observe $\mb{E}[\hat{\delta}_t(a_i^\sharp,a) \mid \mc{F}_{t-1}] = \delta_t(a_i^\sharp,a)$ for any round $t\in [\tilde{s},s_{i,j+1}(a)]$ if $a_i^\sharp,a\in\mc{A}_t$. Then, combining \eqref{eq:relative-gap-detect} of Lemma~\ref{lem:detect} with our concentration bound \eqref{eq:error-bound}, we have that arm $a$ will satisfy the eviction criterion \eqref{eq:elim} over interval $[\tilde{s},s_{i,j+1}(a)]$.
	\end{proof}

\ifx
	\begin{proof}{(of Proposition~\ref{prop:perfect})}
		Since $\base(s,m)$ becomes active at round $s\in [s_{i,j}(a),\tilde{s}]$, by \ref{2a} of Corollary~\ref{cor:behavior}, this base algorithm (and any child of it) will not evict arm $a_i^\sharp$ before round $s_{i,j+1}(a)+1$. Thus, $a_i^\sharp\in \mc{A}_t$ for all rounds $t\in [\tilde{s},s_{i,j+1}(a))$ meaning by \ref{2b} of Proposition~\ref{cor:behavior}, arm $a$ will be excluded from $\mc{A}_{\text{master}}$ by round $s_{i,j+1}(a)$.
	\end{proof}
\fi

	Next, following the outline of Section~\ref{subsec:episode-regret}, we bound the the regret of a fixed arm $a$ to $a_i^\sharp$ over the bad segments w.r.t. $a$. It should be understood that in what follows, we condition on $t_{\ell}$. First, fix an arm $a$ and define the {\em bad round} $s(a)>t_{\ell}$ as the smallest round which satisfies, for some fixed $c_7>0$:
		\begin{equation}\label{eq:big-segment-regret}
			\ds\sum_{(i,j)}  \sqrt{s_{i,j+1}(a)-s_{i,j}(a)} > c_7\log(T) \sqrt{s(a) - t_{\ell}},
		\end{equation}
		where the above sum is over all pairs of indices $(i,j)\in\mb{N}\times\mb{N}$ such that $[s_{i,j}(a),s_{i,j+1}(a))$ is a bad segment with $s_{i,j+1}(a)<s(a)$. We will show that arm $a$ is evicted within episode $\ell$ with high probability by the time the bad round $s(a)$ occurs.


		For each bad segment $[s_{i,j}(a),s_{i,j+1}(a))$, let $\tilde{s}_{i,j}(a) \doteq \left\lceil \frac{s_{i,j}(a)+s_{i,j+1}(a)}{2} \right\rceil$ denote the midpoint of the bad segment and also let $m_{i,j} \doteq 2^n$ where $n\in\mb{N}$ satisfies:
		\[
			2^n \geq s_{i,j+1}(a) - s_{i,j}(a) > 2^{n-1}.
		\]
		Next, recall that the Bernoulli $B_{m.t}$ decides whether $\base(t,m)$ activates at round $t$ (see Line~\ref{line:add-replay} of Algorithm~\ref{meta-alg}). If for some $t\in [s_{i,j}(a),\tilde{s}_{i,j}(a)]$, $B_{t,m_{i,j}}=1$, i.e. a perfect replay is scheduled, then $a$ will be evicted from $\Amaster$ by round $s_{i,j+1}(a)$ (Corollary~\ref{cor:perfect}). We will show this happens with high probability via concentration on the sum $\sum_{(i,j)}\sum_t B_{t,m_{i,j}}$ where $j,i,t$ run through all $t\in [s_{i,j}(a),\tilde{s}_{i,j}(a))$ and all bad segments $[s_{i,j}(a),s_{i,j+1}(a))$ with $s_{i,j+1}(a)<s(a)$. Note that these random variables only depend on the fixed arm $a$, the episode start time $t_{\ell}$, and the randomness of scheduling replays on Line~\ref{line:add-replay}. In particular, the $B_{t,m_{i,j}}$ are independent conditional on $t_{\ell}$.

		Then, a Chernoff bound over the randomization of \meta on Line~\ref{line:add-replay} of Algorithm~\ref{meta-alg} conditional on $t_{\ell}$ yields
		\[
		\mb{P}\left( \sum_{(i,j)}\sum_{t} B_{t,m_{i,j}} \leq \frac{\mb{E}[ \sum_{(i,j)}\sum_{t} B_{t,m_{i,j}}\mid t_{\ell}]}{2} \mid t_{\ell}\right) \leq \exp\left(-\frac{\mb{E}[\sum_{(i,j)}\sum_{t}  B_{t,m_{i,j}} \mid t_{\ell}]}{8}\right).
		\]
		We claim the error probability on the R.H.S. above is at most $1/T^3$. To this end, we compute:
		\begin{align*}
			\mb{E}\left[ \sum_{(i,j)}\sum_{t} B_{t,m_{i,j}} \mid t_{\ell}\right] &\geq \ds\sum_{(i,j)}  \sum_{t=s_{i,j}(a)}^{\tilde{s}_{i,j}(a)} \frac{1}{\sqrt{m_{i,j}\cdot (t - t_{\ell})}}
												  \geq \frac{1}{4}\ds\sum_{(i,j)} \sqrt{\frac{s_{i,j+1}(a)-s_{i,j}(a)}{s(a) - t_{\ell}}}
												   \geq \frac{c_7}{4} \log(T),
		\end{align*}
		where the last inequality follows from \eqref{eq:big-segment-regret}. The R.H.S. above is larger than $24\log(T)$ for $c_7$ large enough, showing that the error probability is small. Taking a further union bound over the choice of arm $a\in[K]$ gives us that $\sum_{(i,j)}\sum_{t} B_{t,m_{i,j}} > 1$ for all choices of arm $a$ (define this as the good event $\mc{E}_2(t_{\ell})$) with probability at least $1-K/T^3$.

		Recall on the event $\mc{E}_1$ the concentration bounds of Proposition~\ref{prop:concentration} hold. Then, on $\mc{E}_1\cap \mc{E}_2(t_{\ell})$, we must have $t_{\ell+1}-1 \leq s(a_{\ell})$ since otherwise $a_{\ell}$ would have been evicted by some perfect replay before the end of the episode $t_{\ell+1}-1$ by virtue of $\sum_{(i,j)}\sum_t B_{t,m_{i,j}}>1$ for arm $a_{\ell}$. Thus, by the definition of the bad round $s(a_{\ell})$ \eqref{eq:big-segment-regret}, we must have:
		\begin{equation}\label{eq:not-too-many-bad}
			\ds\sum_{[s_{i,j}(a_{\ell}),s_{i,j+1}(a_{\ell})): s_{i,j+1}(a_{\ell}) < t_{\ell+1}-1}  \sqrt{s_{i,j+1}(a_{\ell})-s_{i,j}(a_{\ell})} \leq c_7\log(T) \sqrt{t_{\ell+1} - t_{\ell}}.
		\end{equation}
		Thus, by \eqref{eq:segment} in Definition~\ref{defn:bad-segment}, over the bad segments $[s_{i,j}(a_{\ell}),s_{i,j+1}(a_{\ell}))$ which elapse before the end of the episode $t_{\ell+1}-1$, the regret of $a_{\ell}$ to $a_t^\sharp$ is at most order $\log^2(T)\sqrt{K\cdot (t_{\ell+1}-t_{\ell})}$.

	Over each non-bad critical segment $[s_{i,j}(a_{\ell}),s_{i,j+1}(a_{\ell}))$ and the last segment $[s_{i,j}(a_{\ell}),t_{\ell+1}-1)$, the regret of playing arm $a_{\ell}$ to $a_i^\sharp$ is at most $\log(T)\sqrt{\tau_{i+1}-\tau_i}$ since there is at most one non-bad critical segment per phase $[\tau_i,\tau_{i+1})$ (see \eqref{eq:segment} in Definition~\ref{defn:bad-segment}).

		So, we conclude that on event $\mc{E}_1\cap \mc{E}_2(t_{\ell})$:
		\[
			\sum_{t=t_{\ell}}^{t_{\ell+1}-1} \delta_t(a_t^\sharp,a_{\ell}) \leq c_8\log^{2}(T)\sum_{i\in\textsc{Phases}(t_{\ell},t_{\ell+1})} \sqrt{K(\tau_{i+1} - \tau_i)}.
		\]
		Taking expectation, we have by conditioning first on $t_{\ell}$ and then on event $\mc{E}_1\cap \mc{E}_2(t_{\ell})$:
		\begin{align*}
			\mb{E}\left[ \sum_{t=t_{\ell}}^{t_{\ell+1}-1} \delta_t(a_t^\sharp,a_{\ell})\right] &\leq  \mb{E}_{t_{\ell}} \left[ \mb{E}\left[ \pmb{1}\{\mc{E}_1\cap \mc{E}_2(t_{\ell})\} \sum_{t=t_{\ell}}^{t_{\ell+1}-1} \delta_t(a_t^\sharp,a_{\ell}) \mid t_{\ell} \right] \right] + T\cdot \mb{E}_{t_{\ell}} \left[ \mb{E}\left[ \pmb{1}\{\mc{E}_1^c \cup \mc{E}_2^c(t_{\ell})\} \mid t_{\ell} \right] \right]\\
												      &\leq c_8\log^2(T) \mb{E}_{t_{\ell}} \left[ \mb{E}\left[ \pmb{1}\{\mc{E}_1\cap \mc{E}_2(t_{\ell})\} \sum_{i\in\textsc{Phases}(t_{\ell},t_{\ell+1})} \sqrt{K (\tau_{i+1} - \tau_i}) \mid t_{\ell} \right]\right] + \frac{K}{T^2}\\
												      &\leq c_8\log^2(T) \mb{E}\left[ \pmb{1}\{\mc{E}_1\}  \sum_{i\in\textsc{Phases}(t_{\ell},t_{\ell+1})} \sqrt{\tau_{i+1} - \tau_i}\right]  + \frac{1}{T},
		\end{align*}
		where in the last step we bound $\pmb{1}\{\mc{E}_1\cap \mc{E}_2(t_{\ell})\} \leq \pmb{1}\{\mc{E}_1\}$ and apply tower law again. This shows \ref{item:regret-persistent} is at most order the R.H.S. of \eqref{eq:regret-episode}.	$\hfill\blacksquare$



\section{Proof of Corollary~\ref{cor:tv}}\label{app:tv-proof}

The proof of Corollary~\ref{cor:tv} follows straightforwardly from Definition~\ref{defn:sig-shift}. Recall from Section~\ref{subsec:result} that $V \doteq \sum_{t = 2}^T \max_{a\in [K]} |\mu_{t}(a) - \mu_{t - 1}(a)|$ is the total variation of change in the rewards. Then, by Theorem~\ref{thm}, it suffices to show
\begin{equation}\label{eq:variation-bound}
	\sum_{i=0}^{\Lsig} \sqrt{K(\tau_{i+1} - \tau_i)} \lesssim \sqrt{KT} + (KV)^{1/3}\cdot T^{2/3}.
\end{equation}
Fix a phase $[\tau_i,\tau_{i+1})$ such that $\tau_{i+1}<T+1$. We first bound the total variation over this phase \[
	V_{[\tau_i,\tau_{i+1})} \doteq \sum_{t = \tau_i+1}^{\tau_{i+1}} \max_{a\in [K]} |\mu_{t}(a) - \mu_{t-1}(a)|.
\]
By the definition of significant shift (Definition~\ref{defn:sig-shift}), an arm $a=\argmax_{a\in[K]} \mu_{\tau_{i+1}}(a)$ must incur significant regret \eqref{eq:bad-arm} on the interval $[s_1,s_2]$ for some $\tau_i \leq s_1 < s_2 < \tau_{i+1}$, or
\[
	\sum_{t=s_1}^{s_2} \delta_{t}(a) \geq \sqrt{K(s_2 - s_1)}.
\]
Since $\sqrt{s_2-s_1} > \sum_{t=s_1}^{s_2} 1/\sqrt{\tau_{i+1} - \tau_i}$, there must be a round $t \in [s_1,s_2]$ such that $\delta_t(a) \geq \sqrt{K/(\tau_{i+1} - \tau_i)}$. Let $a' \in \argmax_{a'\in[K]} \mu_{t}(a')$. Then, we have
\begin{align*}
	\sqrt{\frac{K}{\tau_{i+1} - \tau_i}} \leq \delta_t(a) &\leq \mu_t(a') - \mu_t(a) + \mu_{\tau_{i+1}}(a) - \mu_{\tau_{i+1}}(a')\\
				       &\leq |\mu_{\tau_{i+1}}(a) - \mu_t(a)| + |\mu_t(a') - \mu_{\tau_{i+1}}(a')|\\
				       &\leq 2\sum_{s=t+1}^{\tau_{i+1}} \max_{a\in [K]} |\mu_{s}(a) - \mu_{s-1}(a)|\\
				       &\leq 2\cdot V_{[\tau_i,\tau_{i+1})}
\end{align*}
This gives us a lower bound on the total variation over each interval $[\tau_i,\tau_{i+1})$. Then, by H\"{o}lder's inequality:
\begin{align*}
	\sum_{i=0}^{\Lsig} \sqrt{K(\tau_{i+1} - \tau_i)} &= \sqrt{K(T+1-\tau_{\Lsig})} + \sum_{i=0}^{\tilde{L}-1} \sqrt{K(\tau_{i+1}-\tau_i)}\\
							 &\leq \sqrt{KT}+ \left(\sum_{i=0}^{\Lsig} \sqrt{\frac{K}{\tau_{i+1} - \tau_i}}\right)^{1/3} \left(\sum_{i=0}^{\Lsig} (\tau_{i+1} - \tau_i)\sqrt{K}\right)^{2/3} \\
							 &\leq \sqrt{KT} + \left(\sum_{i=0}^{\Lsig} 2V_{[\tau_i,\tau_{i+1})}\right)^{1/3} K^{1/3}T^{2/3} \\
							 &= \sqrt{KT} + (2KV)^{1/3}\cdot T^{2/3}.
\end{align*}
$\hfill\blacksquare$

\end{document}